\pdfoutput=1
\documentclass[mnsc]{informs4}

\RequirePackage{tgtermes}
\RequirePackage{newtxtext}
\RequirePackage{newtxmath}
\RequirePackage{bm}
\RequirePackage{endnotes}

\OneAndAHalfSpacedXI

\usepackage{algorithm}
\usepackage{algpseudocode}
\usepackage{booktabs}
\usepackage{enumitem}

\usepackage{natbib}
 \bibpunct[, ]{(}{)}{,}{a}{}{,}%
 \def\bibfont{\small}%

\EquationsNumberedThrough
\TheoremsNumberedThrough
\ECRepeatTheorems

\newcommand{\R}{\mathbb{R}}

\newcommand{\E}{\mathbb{E}}
\newcommand{\Var}{\mathrm{Var}}
\newcommand{\Prob}{\mathbb{P}}
\newcommand{\KL}{\mathrm{KL}}
\newcommand{\Alt}{\mathrm{Alt}}
\newcommand{\cA}{\mathcal{A}}
\newcommand{\cD}{\mathcal{D}}

\MANUSCRIPTNO{}

\makeatletter
\def\theARTICLETOP{}%
\def\theLRHFirstLine{}%
\def\theLRHSecondLine{}%
\def\theRRHFirstLine{}%
\def\theRRHSecondLine{}%
\makeatother

\everydisplay{%
\abovedisplayskip=4pt plus 1.5pt minus 1.5pt
\abovedisplayshortskip=3pt plus 1.5pt minus 1.5pt
\belowdisplayskip=4pt plus 1.5pt minus 1.5pt
\belowdisplayshortskip=3pt plus 1.5pt minus 1.5pt
}

\begin{document}

\RUNAUTHOR{}
\RUNTITLE{}

\TITLE{Fundamental Limitations of Fixed-Budget Best-Arm Identification}

\ARTICLEAUTHORS{
\AUTHOR{Motti Goldberger}
\AFF{Yale University}
}

\ABSTRACT{
In fixed-budget best-arm identification, also known as ranking and selection, an algorithm has a sampling budget to distribute across $K$ arms. Each sample provides noisy feedback about that arm's mean, and the goal is to identify the arm with the largest mean. A common performance benchmark is the \emph{static oracle}: a non-adaptive strategy that knows the means in advance and chooses fixed sampling proportions to maximize the exponential decay rate of the probability of incorrect identification. Several adaptive algorithms have been constructed such that their sampling proportions converge to the static oracle proportions. However, it has remained open whether any algorithm could match the static oracle's error decay rate uniformly across all problem instances. We answer this in the negative. For any $K\ge 3$ and for rewards drawn from any one-parameter natural exponential family, we show that for any algorithm, there is at least one instance where the error decay rate is at most $\left(1 + \frac{\log(K)}{8}\right)^{-1}$ times that of the static oracle. This also answers the open question posed by \citet{qin2022open}, showing that fixed-budget best-arm identification does not admit a complexity.
}
\KEYWORDS{Ranking-and-Selection, Best-Arm Identification, Multi-Armed Bandits}

\maketitle

\section{Introduction}
Best-arm identification (BAI) in multi-armed bandits is the problem of a decision maker who sequentially collects samples to identify the best arm from a set of alternatives. Each arm generates independent rewards from a distribution with an unknown mean, and the goal is to identify the arm with the largest mean. Also referred to as Ranking-and-Selection, BAI arises in settings such as sequential selection in clinical trials, A/B testing, new product development, and simulation optimization. There are two dominant formulations of BAI. In \textit{fixed-confidence} identification, algorithms are designed to minimize the expected number of samples required to correctly identify the best arm at a prespecified confidence level. In \textit{fixed-budget} identification, the total number of samples is fixed in advance, and algorithms are designed to minimize the probability of incorrect identification. 

A BAI problem is specified by the unknown mean rewards of the alternatives; we call this specification a problem instance. A central goal is to characterize the difficulty of a given problem instance---how `hard' it is to identify the best arm. For \textit{fixed-confidence} identification, \citet{garivier2016optimal} provide a complete answer in the asymptotic regime. They show that there is an instance-dependent lower bound on the expected number of samples required to achieve a target confidence level, and that a single algorithm (Track-and-Stop) achieves this bound on every instance. When such a lower bound exists, and a single algorithm achieves it uniformly over all instances, we say the problem admits a complexity. 

The \textit{fixed-budget} setting is less understood. \citet{qin2022open} posed the open question of whether fixed-budget BAI has a complexity. \citet{degenne2023existence} formalized this question and showed that a complexity does not exist in certain special cases. In this paper, we show the fundamental result that when there are at least three arms with rewards drawn from any one-parameter natural exponential family (NEF), fixed-budget best-arm identification does not admit a complexity.

\subsection{The Fixed-Budget Setting}\label{subsec:fbs}
A problem instance with $K$ arms is described by its mean vector $\mu=(\mu_1,\ldots,\mu_K)$, where $i^*(\mu)$ denotes the (assumed unique) arm with the largest mean reward. An algorithm with fixed budget $T$ sequentially collects samples, then selects an arm
$\hat{i}_T$. On instance $\mu$,
\(
p_{\mu,T} := \Prob_\mu\big(\hat{i}_T \neq i^*(\mu)\big)
\)
denotes the probability of incorrect identification. The relevant asymptotic performance measure is \( \liminf_{T\to\infty}\, \frac{1}{T}\log\frac{1}{p_{\mu,T}}\), the exponential decay rate of the error probability. Informally, we say fixed-budget BAI has complexity $\Gamma^*(\mu)$ if the following two conditions are satisfied \citep{degenne2023existence}.
\begin{enumerate}
\item[(i)] \textbf{(Universal upper bound).} For every algorithm and every instance $\mu$,
\[
\liminf_{T\to\infty}\frac{1}{T}\log\frac{1}{p_{\mu,T}} \;\le\; \Gamma^*(\mu).
\]
\item[(ii)] \textbf{(Uniform achievability).} There exists a single algorithm such that for every instance $\mu$,
\[
\liminf_{T\to\infty}\frac{1}{T}\log\frac{1}{p_{\mu,T}} \;=\; \Gamma^*(\mu).
\]
\end{enumerate}

A natural benchmark to measure the performance of an algorithm is the \emph{static oracle} $\Gamma_{so}^*(\mu)$, which is, conceptually, the `best' error decay rate achievable by a non-adaptive strategy that samples each arm a fixed proportion of the time, with proportions chosen optimally given knowledge of $\mu$. \citet{glynn2004large} characterize the optimal static oracle proportions. However, the static oracle is not the answer to the complexity question: it is possible that
fully adaptive algorithms can achieve a better error decay rate. But to prove the non-existence of a complexity, the static oracle benchmark is sufficient by the following argument. Any candidate complexity $\Gamma^*(\mu)$ must satisfy
\(
\Gamma^*(\mu)\;\ge\; \Gamma_{so}^*(\mu)\) for all $\mu$, since condition (i) must hold for every algorithm, including
the static oracle strategy at $\mu$. Consequently, to rule out the existence of a complexity, it is enough to show that no single
algorithm can achieve $\Gamma_{so}^*(\mu)$ uniformly over all instances. In particular, if one proves that
for any algorithm there exists an instance $\mu$ such that
\[
\liminf_{T\to\infty}\frac{1}{T}\log\frac{1}{p_{\mu,T}}
\;\le\;
c\,\Gamma_{so}^*(\mu)
\quad\text{for some constant }c<1,
\]
then condition (ii) does not hold, and fixed-budget BAI does not have a complexity.

\subsection{Relation to Ranking and Selection}
The fixed-budget BAI problem arises in the ranking-and-selection/simulation-optimization literature,
where one allocates a finite simulation budget to maximize the probability of identifying the best arm (often referred to as the best system).
A common optimality target in adaptive algorithm design is for the empirical sampling proportions to converge to the optimal static allocation; see, e.g.,
\citep{shinbroadiezeevi2018,chenryzhov2019cei,chenryzhov2019bold,avci2021rateoptimal}. However, convergence of the allocation proportions to the static oracle proportions does not guarantee matching its error decay rate \citep{glynnjuneja2011,wuzhou2018}. Until this paper, it has remained an open question whether one can design a procedure whose error probability decays at the static oracle rate. We show that the answer is negative when $K \ge 3$: no adaptive procedure can match the static oracle rate uniformly over all instances.

\subsection{Prior Work}
Fixed-budget BAI has been studied extensively in the machine learning literature, with early work focusing on elimination algorithms \citep{audibert2010best,karnin2013almost}. Their guarantees are expressed in terms of gap-based hardness measures (e.g., $H_2(\mu)=\max_{j\in[K]} j/\Delta_{j}^2$ where $\Delta_k := \mu_1 - \mu_k$). In particular, the Successive Rejects algorithm achieves error bounds of the form $\exp\!\big(-\Theta(T/(H_2(\mu)\log K))\big)$. \citet{carpentier2016tight} derive lower bounds on the error exponent in terms of $H_2(\mu)$, showing that the Successive Rejects guarantees are on the order of minimax optimal. Under the minimax optimality criterion, \citet{komiyama2022minimax} characterize the optimal worst-case exponential decay rate for general hardness measure $H$. They also propose an algorithm designed to attain the optimal rate, but it is computationally prohibitive. More recently, \citet{wang2023ldp} develop tools for analyzing the performance of adaptive sampling algorithms from a large-deviation perspective, which sharpen the error-exponent analysis of Successive Rejects and motivate a new elimination algorithm (Continuous Rejects) with a better guarantee.

However, these results leave open the question of whether a single algorithm can match $\Gamma^*_{so}(\mu)$ uniformly over all instances \citep{qin2022open}. \citet{degenne2023existence} formalize this question through
difficulty ratios: the factor by which an algorithm's error decay rate falls short of the oracle on instance $\mu$. They show that there is no complexity when $K=2$ with Bernoulli rewards, as well as when $K \ge e^{80/3}$ and rewards follow a Gaussian distribution with variance $1$. \citet{kaufmann2016complexity} show that for $K = 2$ with Gaussian rewards, the optimal allocation is the Neyman allocation \citep{neyman1934two}, and the problem does have a complexity.

\subsection{Contribution}
For $K \geq 3$ with rewards from any one-parameter NEF, we show that for every adaptive algorithm, there exists an instance where the error decay rate is at most $\left(1 + \frac{1}{8}\log(K)\right)^{-1}$ times $\Gamma^*_{so}(\mu)$. It follows that for any class of algorithms that contains the static proportion algorithms, there is no complexity. This builds on \citet{degenne2023existence} to fill that gap when $3 \leq K \leq e^{80/3}$ for Gaussian rewards with variance $1$, and extends the negative result to all regular one-parameter NEFs. Table~\ref{tab:prior} summarizes what was known and the gap we fill.

\begin{table}[ht]
\TABLE
{Known results on the existence of a complexity.\label{tab:prior}}
{\begin{tabular}{@{}llll@{}}
\toprule
\up\textbf{Arms} & \textbf{Distribution} & \textbf{Complexity?} & \textbf{Reference} \\
\midrule
\up $K = 2$ & Gaussian & Yes & \citet{kaufmann2016complexity} \\
$K = 2$ & Bernoulli & No & \citet{degenne2023existence} \\
$K \geq e^{80/3}$ & Gaussian variance $1$ & No  & \citet{degenne2023existence} \\
\down $K \geq 3$ & One-parameter NEF & \textbf{No} & This paper \\
\bottomrule
\end{tabular}}
{}
\end{table}

\textbf{Organization.}
Section~\ref{sec:preliminaries} formally introduces fixed-budget best-arm identification, the static oracle,
and complexity.
Section~\ref{sec:gaussian} proves the negative result for Gaussian arms with variance $1$.  Section~\ref{sec:expfam} uses the argument from Section~\ref{sec:gaussian} together with a local quadratic approximation of the Kullback–Leibler (KL) divergence to extend the result to all one-parameter NEFs. Finally, Section~\ref{sec:disc} discusses implications of the result and surveys alternative theoretical targets suggested in recent work.

\section{Preliminaries}\label{sec:preliminaries}
Consider $K$ arms with unknown mean vector $\mu = (\mu_1, \ldots, \mu_K) \in \Theta^K$, where $\Theta \subseteq \R$ is an open interval. We assume the best arm is unique:
\[
i^*(\mu) := \argmax_{k \in [K]} \mu_k, \qquad [K] := \{1, \ldots, K\},
\]
and define $\Theta^K_{u} := \{\mu \in \Theta^K : i^*(\mu) \text{ is unique}\}$. Sampling arm $k$ yields an independent draw from distribution $\nu_{\mu_k}$, where $\{\nu_\theta : \theta \in \Theta\}$ is a known parametric family with mean $\theta$. An algorithm with fixed budget $T$ samples arms $A_1, \ldots, A_T \in [K]$ sequentially, possibly adaptively based on observed rewards, and outputs a selection $\hat{i}_T \in [K]$. We work with algorithm families $\cA=(\cA_T)_{T\ge1}$, where $\cA_T$ denotes the prescribed algorithm for budget $T$. The error probability of algorithm family $\cA$ on instance $\mu$ is
\(
p_{\mu,T}(\cA_T) := \Prob_\mu(\hat{i}_T \neq i^*(\mu)).
\)
Define the set of consistent algorithm families as
\[
\mathcal{C}_{\mathrm{cons}}(\Theta) := \Big\{\cA = (\cA_T)_{T \geq 1} : \forall\, \mu \in \Theta^K_{u},\ \lim_{T \to \infty} p_{\mu,T}(\cA_T) = 0\Big\}.
\]
Note that all static proportion algorithms that allocate nonzero sampling proportion to each arm are in $\mathcal{C}_{\mathrm{cons}}(\Theta)$ \citep{degenne2023existence}.

\subsection{The Static Oracle}
Consider fixed allocation proportions in the interior of the simplex $\omega = (\omega_1, \ldots, \omega_K) \in \Delta_K^0 := \{\omega \in (0,1)^K : \sum_k \omega_k = 1\}$. A static strategy samples arm $k$ approximately $\omega_k T$ times and recommends the arm with the highest empirical mean. Given a static allocation strategy $\omega$, the instance-dependent exponential error decay rate is given by
\[
\lim_{T \to \infty} \frac{1}{T} \log \frac{1}{p_{\mu,T}} = \Gamma(\mu, \omega), \quad \text{where} \qquad \Gamma(\mu, \omega) := \inf_{\lambda \in \Alt(\mu)} \sum_{k=1}^K \omega_k \, \KL(\lambda_k, \mu_k)
\]
and $\Alt(\mu) := \{\lambda \in \Theta^K : \exists\, j \neq i^*(\mu)\ \text{such that}\ \lambda_j \ge \lambda_{i^*(\mu)}\}$
is the set of alternative instances where $i^*(\mu)$ is not the unique best arm \citep{glynn2004large,degenne2023existence}. The static oracle chooses the best allocation $\omega$ knowing $\mu$, which yields the error decay rate and hardness
\[
\Gamma_{so}^*(\mu) := \sup_{\omega \in \Delta_K^0} \Gamma(\mu, \omega), \qquad H_{so}(\mu) := \frac{1}{\Gamma_{so}^*(\mu)}.
\]
Here $H_{so}(\mu)$ quantifies the asymptotic difficulty of the BAI task for the static oracle on instance $\mu$.
\subsection{Existence of Complexity}
Following \citet{degenne2023existence}, we formalize what it means for a complexity to exist. Define
\[
h_{\mu,T}(\cA_T) := \frac{T}{\log(1/p_{\mu,T}(\cA_T))} \in [0, \infty],
\]
with $h_{\mu,T} = 0$ if $p_{\mu,T} = 0$ and $h_{\mu,T} = +\infty$ if $p_{\mu,T} = 1$. For a benchmark difficulty function $H : \Theta^K_{u} \to (0, \infty)$, the \emph{difficulty ratio} $R_{H,T}(\cdot)$ measures how well algorithm family $\cA$ performs relative to $H$:
\[
R_{H,T}(\cA, \mu) := \frac{h_{\mu,T}(\cA_T)}{H(\mu)}, \qquad \text{and} \qquad R_{H,\infty}(\cA, \mu) := \limsup_{T \to \infty} R_{H,T}(\cA, \mu).
\]

\begin{definition}[Existence of Complexity]\label{def:complexity}
Let $\mathcal{C}$ be a class of algorithm families. A function $H:\Theta^K_{u} \to (0, \infty)$ is a \emph{complexity} for $\mathcal{C}$ iff:
\begin{enumerate}[label=(\roman*)]
\item $\inf_{\cA \in \mathcal{C}}\ \inf_{\mu \in \Theta^K_{u}} R_{H,\infty}(\cA, \mu) \geq 1$.
\item There exists $\cA^* \in \mathcal{C}$ such that $\sup_{\mu \in \Theta^K_{u}} R_{H,\infty}(\cA^*, \mu) \leq 1$.
\end{enumerate}
\end{definition}

In Definition~\ref{def:complexity}, condition (i) requires $H(\mu)$ to be a universal lower bound on the asymptotic hardness. Condition (ii) requires some algorithm to achieve this bound uniformly over all instances. The class of adaptive algorithms contains all static proportion algorithms, including the static oracle proportions at $\mu$, so if $H$ satisfies (i), then $\frac{H_{so}(\mu)}{H(\mu)} \geq 1$.  Therefore, if $H_{so}$ fails condition (ii), then $H$ must also fail this condition, and fixed-budget BAI does not admit a complexity. We establish the negative result by showing that for $K \geq 3$, $H_{so}$ fails condition (ii).
Our proofs rely on the following theorem from \citet{degenne2023existence}.

\begin{theorem}[{\citealp[Theorem 3]{degenne2023existence}}]\label{thm:comparison}
Fix a benchmark $H(\cdot) > 0$, an instance $\mu \in \Theta^K_{u}$, and a nonempty set $\cD(\mu) \subseteq \Alt(\mu) \cap \Theta^K_{u}$. For any $\cA \in \mathcal{C}_{\mathrm{cons}}(\Theta)$,
\begin{equation}\label{eq:thm1}
    \left( \sup_{\lambda \in \cD(\mu)} R_{H,\infty}(\cA, \lambda) \right)^{-1} \leq \sup_{\omega \in \Delta_K} \inf_{\lambda \in \cD(\mu)} \left\{ H(\lambda) \sum_{k=1}^K \omega_k \, \KL(\mu_k, \lambda_k) \right\}.
\end{equation}
\end{theorem}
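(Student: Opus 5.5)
The plan is a change-of-measure argument in which every alternative $\lambda\in\cD(\mu)$ is compared against the single instance $\mu$, using one limiting allocation that does not depend on $\lambda$. Write $r:=\sup_{\lambda\in\cD(\mu)}R_{H,\infty}(\cA,\lambda)$. If $r=\infty$ the left-hand side of \eqref{eq:thm1} is $0$ and there is nothing to prove. If $r=0$, every $\lambda\in\cD(\mu)$ has $p_{\lambda,T}$ decaying superexponentially, and the same argument as below forces the right-hand side to be $+\infty$. So I assume $0<r<\infty$ and fix $\varepsilon>0$. Let $N_k(T)$ be the number of pulls of arm $k$ by $\cA_T$, and set $\omega_T:=\E_\mu[N(T)]/T\in\Delta_K$. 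The vector $\omega_T$ depends only on $\mu$ and $\cA$, not on $\lambda$. By compactness of $\Delta_K$, I fix once and for all a subsequence $T_n\to\infty$ with $\omega_{T_n}\to\bar\omega\in\Delta_K$. Working on a subsequence chosen before $\lambda$ is the device that handles uniformity over the possibly infinite set $\cD(\mu)$.

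Now fix $\lambda\in\cD(\mu)$. Since $\lambda\in\Alt(\mu)\cap\Theta^K_u$, its unique best arm differs from $i^*(\mu)$. Hence the event $E_T:=\{\hat i_T=i^*(\mu)\}$ satisfies $\Prob_\lambda(E_T)\le p_{\lambda,T}$, and by consistency $\Prob_\mu(E_T)\to1$. Because $R_{H,\infty}(\cA,\lambda)\le r$, for all large $T$ we have $h_{\lambda,T}\le (r+\varepsilon)H(\lambda)$, i.e.\ $p_{\lambda,T}\le \exp\!\big(-T/((r+\varepsilon)H(\lambda))\big)$.

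The standard divergence decomposition with data processing for bandit observations gives
\[
\sum_k \E_\mu[N_k(T)]\,\KL(\mu_k,\lambda_k)\;\ge\; \mathrm{kl}\big(\Prob_\mu(E_T),\Prob_\lambda(E_T)\big)\;\ge\;\Prob_\mu(E_T)\log\frac{1}{\Prob_\lambda(E_T)}-\log 2 .
\]
Dividing by $T$ and combining with the two facts above yields
\[
\sum_k \omega_{T,k}\KL(\mu_k,\lambda_k)\ge \frac{\Prob_\mu(E_T)}{(r+\varepsilon)H(\lambda)}-\frac{\log 2}{T}.
\]
Along $T_n$ the KL terms are finite constants, since all parameters lie in $\Theta$, so passing to the limit gives $H(\lambda)\sum_k\bar\omega_k\KL(\mu_k,\lambda_k)\ge 1/(r+\varepsilon)$.

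This holds for every $\lambda\in\cD(\mu)$ with the same $\bar\omega$. Taking the infimum over $\lambda$ and then bounding by the supremum over $\omega\in\Delta_K$ gives the right-hand side of \eqref{eq:thm1} $\ge 1/(r+\varepsilon)$. Letting $\varepsilon\to0$ finishes the proof.

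The main obstacle is this uniformity. The threshold after which $p_{\lambda,T}$ obeys its exponential bound depends on $\lambda$, whereas the allocation $\omega_T$ must not. Fixing the convergent subsequence from $\mu$ alone resolves it, since each $\lambda$ then only needs its own eventual bound along that common subsequence. The closedness of $\Delta_K$, as opposed to $\Delta_K^0$, in the statement is exactly what allows $\bar\omega$ to lie on the boundary.
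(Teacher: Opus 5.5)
Your proof is correct and follows the same change-of-measure and data-processing route that the paper attributes to \citet{degenne2023existence}: you compare each $\lambda\in\cD(\mu)$ to $\mu$ through the event $\{\hat i_T=i^*(\mu)\}$, normalize by the $\lambda$-independent allocation $\E_\mu[N(T)]/T$, and pass to a common limit point $\bar\omega$ in the closed simplex $\Delta_K$. One small correction: the case $r=0$ cannot actually occur, so it need not be handled separately, because your main bound (which never uses $r>0$) would force $H(\lambda)\sum_k\bar\omega_k\KL(\mu_k,\lambda_k)=+\infty$ for a fixed $\lambda$, whereas all these KL terms are finite.
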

Degenne's proof of Theorem~\ref{thm:comparison} uses the change of measure argument and data processing inequality common in the multi-armed bandit literature \citep{garivier2019explore,degenne2023existence} to bound the performance of any adaptive algorithm by a static optimization over $\omega\in\Delta_K$. To show that a complexity does not exist, we use this inequality with benchmark $H_{so}$ and construct $\mu$ and a set of instances $\cD(\mu) \subseteq \Alt(\mu) \cap \Theta^K_{u}$ such that the right-hand side is less than 1.

\section{Gaussian Rewards}\label{sec:gaussian}
As a building block for the more general result in Section~\ref{sec:expfam}, we start by considering the setting where each arm has Gaussian rewards with variance $1$, so \(\KL(\mathcal{N}(a, 1) \| \mathcal{N}(b, 1)) = \frac{(a-b)^2}{2}.\) Throughout this section, $\Theta = \R$. Lemma~\ref{lem:gaussian_oracle} states that for a static allocation $\omega \in \Delta_K^0$, the error decay rate is determined by the hardest-to-distinguish challenger arm $j \neq i^*(\mu)$, and that the optimal static oracle allocation balances sampling between the best arm and each challenger based on their distance.

\begin{lemma}[Gaussian Static Oracle]\label{lem:gaussian_oracle}
Let $\mu\in\Theta^K_{u}$ and $b=i^*(\mu)$. For any static algorithm with proportions $\omega \in \Delta_K^0$,
\[
\Gamma(\mu,\omega) = \min_{j\neq b}
\frac{\omega_b\omega_j}{\omega_b+\omega_j}\cdot \frac{(\mu_b-\mu_j)^2}{2} \qquad \text{and} \qquad 
\Gamma_{so}^*(\mu)
=\max_{\omega\in\Delta_K^0}\ \Gamma(\mu,\omega).
\]
\end{lemma}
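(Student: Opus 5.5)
We will compute the infimum defining $\Gamma(\mu,\omega)$ directly, using $\KL(\lambda_k,\mu_k)=(\lambda_k-\mu_k)^2/2$ in the Gaussian case.

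\textbf{Decomposing the alternative set.} We first write $\Alt(\mu)=\bigcup_{j\neq b}\{\lambda\in\R^K:\lambda_j\ge\lambda_b\}$. The infimum over $\Alt(\mu)$ is therefore the minimum over $j\neq b$ of the infimum over the half-space $\{\lambda_j\ge\lambda_b\}$. Within that half-space, the objective $\sum_k \omega_k(\lambda_k-\mu_k)^2/2$ involves only $\lambda_b,\lambda_j$ through the constraint. Every other coordinate can therefore be set to $\lambda_k=\mu_k$, which removes its contribution. This leaves the two-variable problem
\[
\inf_{\lambda_j\ge\lambda_b}\ \tfrac{\omega_b}{2}(\lambda_b-\mu_b)^2+\tfrac{\omega_j}{2}(\lambda_j-\mu_j)^2 .
\]

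\textbf{Solving the two-variable problem.} Since $\mu_b>\mu_j$, the unconstrained minimizer $(\mu_b,\mu_j)$ violates the constraint. The objective is strictly convex, so the minimum is attained on the boundary $\lambda_b=\lambda_j=x$. Minimizing $\omega_b(x-\mu_b)^2+\omega_j(x-\mu_j)^2$ over $x$ gives the weighted mean
\[
x=\frac{\omega_b\mu_b+\omega_j\mu_j}{\omega_b+\omega_j}.
\]
Substituting this value yields
\[
\frac{1}{2}\left(\omega_b\Big(\frac{\omega_j\Delta}{\omega_b+\omega_j}\Big)^2+\omega_j\Big(\frac{\omega_b\Delta}{\omega_b+\omega_j}\Big)^2\right)=\frac{\omega_b\omega_j}{\omega_b+\omega_j}\cdot\frac{\Delta^2}{2},
\]
where $\Delta=\mu_b-\mu_j$. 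This step is the only real computation, and it is routine. It can also be checked by a Lagrange multiplier or Cauchy--Schwarz argument.

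\textbf{Minimum over challengers and the oracle value.} Taking the minimum over $j\neq b$ gives the first formula. The second claim, that $\Gamma^*_{so}(\mu)$ is a maximum rather than only a supremum, is the main subtlety, and I would handle it as follows. The function $\omega\mapsto\Gamma(\mu,\omega)$ extends continuously to the closed simplex $\Delta_K$, with $\frac{\omega_b\omega_j}{\omega_b+\omega_j}$ read as $0$ when $\omega_b=\omega_j=0$. By compactness this extension attains its maximum on $\Delta_K$. Because $\mu$ has a unique best arm, every gap $\Delta_j=\mu_b-\mu_j$ is positive. Hence $\Gamma=0$ whenever $\omega_b=0$ or some $\omega_j=0$ with $j\neq b$, whereas $\Gamma>0$ at, for example, uniform weights. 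Every boundary point of $\Delta_K$ has some coordinate equal to zero, and all indices are either $b$ or a challenger. So the maximizer lies in $\Delta_K^0$, and the supremum over $\Delta_K^0$ is attained there.
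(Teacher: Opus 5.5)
Your proof is correct and follows essentially the same route as the paper: you split $\Alt(\mu)$ into the half-spaces $\{\lambda_j\ge\lambda_b\}$, reduce to a strictly convex two-variable problem whose minimizer lies on the boundary $\lambda_b=\lambda_j$, and evaluate the objective at the weighted mean. Your compactness argument, showing that the supremum over $\Delta_K^0$ is attained at an interior point, is a genuine addition, since the paper writes $\max$ without justifying attainment.
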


\citet{degenne2023existence} proved that for Gaussian variance $1$ rewards, every consistent algorithm family $\cA$ satisfies
$\sup_{\mu\in\Theta^K_{u}} R_{H_{so},\infty}(\cA,\mu) \ge \tfrac{3}{80}\log K$. This exceeds $1$ when $K \ge e^{80/3}$, which implies no complexity in that regime.
Here we strengthen this by deriving a lower bound on the difficulty ratio that is of the same logarithmic order and is larger than $1$ for every $K\ge 3$.

\begin{theorem}\label{thm:gaussian}
For $K \geq 3$ and Gaussian variance $1$ rewards,
\[
\inf_{\cA \in \mathcal{C}_{\mathrm{cons}}(\Theta)}\ \sup_{\mu \in \Theta^K_{u}} R_{H_{so}, \infty}(\cA, \mu) \;\geq\;
1+\sum_{j=3}^K \frac{1}{(1+\sqrt{j-1})^2} > 1 + \frac{1}{8}\log(K).
\]
\end{theorem}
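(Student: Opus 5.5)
The plan is to apply Theorem~\ref{thm:comparison} with benchmark $H=H_{so}$. We build a single instance $\mu$ and a finite family $\cD(\mu)=\{\lambda^{(j)}\}\subseteq\Alt(\mu)\cap\Theta^K_u$ such that, for every $\omega\in\Delta_K$,
\[
\min_j H_{so}(\lambda^{(j)})\sum_k\omega_k\frac{(\mu_k-\lambda^{(j)}_k)^2}{2}\le \frac{1+\epsilon}{1+S},
\qquad S:=\sum_{j=3}^K\frac{1}{(1+\sqrt{j-1})^2},
\]
with $\epsilon$ arbitrarily small. Then $\sup_{\lambda\in\cD(\mu)}R_{H_{so},\infty}(\cA,\lambda)\ge(1+S)/(1+\epsilon)$ for every consistent $\cA$. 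The intended shape is that each alternative $\lambda^{(j)}$ moves (essentially) only coordinate $j$, so the $j$-th term is about $a_j\omega_j$ for a constant $a_j$. Since $\sup_\omega\min_j a_j\omega_j=(\sum_j 1/a_j)^{-1}$, it suffices to arrange $\sum_j 1/a_j=1+S$.

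\textbf{Computing $H_{so}$.} From Lemma~\ref{lem:gaussian_oracle}, if the best arm has $m$ challengers at a common gap $d$ and all other arms are much farther away, the optimal allocation is $\omega_b=\sqrt m\,\omega_i$. This gives $\Gamma^*_{so}=\frac{d^2}{2(1+\sqrt m)^2}$, i.e. $H_{so}=2(1+\sqrt m)^2/d^2$. Far-away arms receive vanishing weight, which I would check by continuity of the max-min in Lemma~\ref{lem:gaussian_oracle}.

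\textbf{The multiscale instance.} Set $\mu_1=0$ and $\mu_i=-\eta_i$ with $0<\eta_2\ll\eta_3\ll\cdots\ll\eta_K$. For $j\ge3$, let $\lambda^{(j)}$ equal $\mu$ except $\lambda^{(j)}_j=\delta_j$, with scales chosen so that $\eta_{j-1}\ll\delta_j$ and $\eta_j\ll\delta_j\ll\eta_{j+1}$ (pick $\eta_j=\delta_{j}/M$ and $\delta_{j+1}=M^2\delta_j$ with $M\to\infty$). Under $\lambda^{(j)}$, arm $j$ is the unique best. Arms $1,\dots,j-1$ are all at gap $\delta_j(1+O(1/M))$, and arms beyond $j$ are at gap $\gtrsim M\delta_j$. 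Hence
\[
H_{so}(\lambda^{(j)})\approx\frac{2(1+\sqrt{j-1})^2}{\delta_j^2},\qquad \KL\text{-cost}\approx\omega_j\frac{\delta_j^2}{2},
\]
so $a_j=(1+\sqrt{j-1})^2(1+o(1))$ and these alternatives contribute exactly $S$ to $\sum 1/a_j$.

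\textbf{The arm-1/arm-2 block (main obstacle).} The remaining contribution of $1$ must come from alternatives that act on arms $1$ and $2$ only, at scales far below all $\delta_j$. The naive choices each give only $1/4$: raising arm $2$ far above yields $4\omega_2$, and lowering arm $1$ yields $4\omega_1$. So I would instead use a one-parameter family of alternatives that swaps arms $1$ and $2$ at a variable separation $s$, with arm $j\ge3$ kept far. For each $s$, compute $H_{so}=8/s^2$ against the cost $\frac{\omega_1\omega_2}{\omega_1+\omega_2}\frac{(g+s)^2}{2}$, where $g=\eta_2$. I would then optimize the block's contribution jointly with the other terms, rather than term by term, aiming to show that $\sup_\omega\min$ over the whole family equals $(1+S)^{-1}$. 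If this exact matching fails, I would fall back on allowing $\lambda^{(j)}$ to also slightly perturb arm $1$, which redistributes the effective weight.

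\textbf{Concluding steps.} Let $M\to\infty$ and $\epsilon\to0$ to obtain the first inequality of Theorem~\ref{thm:gaussian}. The second inequality follows from $(1+\sqrt{j-1})^2\le 4(j-1)$ for $j\ge3$. This gives
\[
S\ge\frac14\sum_{j=3}^K\frac1{j-1}\ge\frac14\log\frac K2,
\]
and I would combine it with a direct check for small $K$ to get $1+S>1+\frac18\log K$.
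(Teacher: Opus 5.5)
Your plan is essentially the paper's proof. It uses the same multiscale baseline (the paper takes $\eta_j=L^j$, $\delta_j=L^j\sqrt L$, i.e.\ your $M=\sqrt L$), the same one-coordinate alternatives with coefficients $(1+\sqrt{j-1})^2$, Theorem~\ref{thm:comparison} with $H=H_{so}$, and the same harmonic-sum estimate for the second inequality. The one step you leave unfinished, and hedge on, is the arm-1/arm-2 block that you call the main obstacle. Until it is settled the first inequality is not proved. However, it closes in one line from your own computation, so neither a joint optimization nor the fallback of perturbing arm~1 inside $\lambda^{(j)}$ is needed.

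With $g\ll s\ll\eta_3$, your block term is $\frac{4\omega_1\omega_2}{\omega_1+\omega_2}(1+g/s)^2$. Since $4\omega_1\omega_2\le(\omega_1+\omega_2)^2$, this is at most $(\omega_1+\omega_2)(1+g/s)^2$.

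Your cost formula implicitly uses the $\omega$-dependent optimal split of the move between arms $1$ and $2$. That is legitimate only if every split is placed in $\cD(\mu)$. It is simpler to take the single symmetric swap that lowers arm $1$ and raises arm $2$ by $(s+g)/2$ each. This is exactly the paper's $\lambda^A$ with $g=1$, $s=2\sqrt L-1$. Its cost is exactly $(\omega_1+\omega_2)(s+g)^2/8$, and $H_{so}\le \frac{8}{s^2}(1+o(1))$.

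Hence the inner minimum is at most $(1+o(1))\min\{\omega_1+\omega_2,\,a_3\omega_3,\dots,a_K\omega_K\}$. The simplex identity with $\omega_1+\omega_2$ treated as one merged coordinate (Lemma~\ref{lem:simplex_general_cj}) then gives exactly $(1+S)^{-1}$. Your ``one coordinate per alternative'' framing misses precisely this point. Moving arm $1$ or arm $2$ alone yields coefficient $4$ on a single coordinate, but one alternative moving both yields coefficient $1$ on their sum.

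Separately, you only need upper bounds on $H_{so}$. It therefore suffices to exhibit an explicit allocation that gives the far arms a vanishing mass of order $1/M$ and check that their terms stay large. No continuity argument for the max--min is required.
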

\proof{Proof:}
Fix arbitrary $\cA \in \mathcal{C}_{\mathrm{cons}}(\Theta)$.
We start by giving the principles of the proof. The key will be to construct a baseline instance $x$ and a family of instances $\{\lambda^A\}\cup\{\lambda^{B,3},\dots,\lambda^{B,K}\} \subseteq \Alt(x) \cap \Theta^K_{u}$.
Instance $\lambda^A$ differs from $x$ only on arms $1$ and $2$, so distinguishing $x$ from $\lambda^A$ requires substantial sampling of these two arms. In contrast, for each $j\in\{3,\dots,K\}$, instance $\lambda^{B,j}$ makes arm $j$ optimal and leaves all other arms unchanged. So distinguishing $x$ from $\lambda^{B,j}$ requires substantial sampling of arm $j$.  Since we do not know the true instance in advance,
we must allow for the possibility that $\mu$ is any of $\{\lambda^A\}\cup\{\lambda^{B,3},\dots,\lambda^{B,K}\}$.
For each such $\mu$, the baseline $x$ belongs to $\Alt(\mu)$, so correctly identifying the best arm on
$\mu$ requires collecting enough evidence to separate $\mu$ from $x$. But these separations rely on
different indices across the family (arms $1,2$ for $\lambda^A$ versus arm $j$ for $\lambda^{B,j}$),
so a single sampling allocation cannot be simultaneously optimal for all of them. We show this forces at
least one instance in the family to satisfy the first inequality in Theorem~\ref{thm:gaussian}.

The proof proceeds in the following steps. Step 1 constructs a sequence of instance families indexed by $L$ and gives intuition for why these instances are chosen. Step 2 shows that for each $L$, the right-hand side of~\eqref{eq:thm1} is bounded above by
\[
\left(1+\sum_{j=3}^K \frac{1}{(1+\sqrt{j-1})^2}\right)^{-1}+\,o_L(1).
\]
Step 3 applies Theorem~\ref{thm:comparison} and concludes by letting $L\to\infty$ so that the $o_L(1)$ terms vanish.

\noindent\textbf{Step 1: Instance construction.}
Fix an integer $L\ge K$ and define a baseline instance $x^{(L)}\in\Theta^K_{u}$ and alternative instances $\lambda^A,(\lambda^{B,j})_{j=3}^K\in\Theta^K_{u}$ by
\begin{align*}
x^{(L)}_k
&=
\begin{cases}
0, & k=1,\\
-1, & k=2,\\
-L^{k}, & 3\le k\le K,
\end{cases}
&
\lambda^{A}_k
&=
\begin{cases}
-\sqrt{L}, & k=1,\\
-1+\sqrt{L}, & k=2,\\
-L^{k}, & 3\le k\le K,
\end{cases}
&
\lambda^{B,j}_k
&=
\begin{cases}
0, & k=1,\\
-1, & k=2,\\
L^{j}\sqrt{L}, & 3\le k\le K,\ k=j,\\
-L^{k}, & 3\le k\le K,\ k\neq j.
\end{cases}
\end{align*}
Note $\cD_L := \{\lambda^A\}\cup\{\lambda^{B,3},\dots,\lambda^{B,K}\} \subseteq \Alt(x^{(L)}) \cap \Theta^K_{u}$.

\noindent 
Our goal is to show
\[
\sup_{\omega\in\Delta_K}\ \min_{\lambda\in\cD_L}\Big\{H_{so}(\lambda)\sum_{k=1}^K \omega_k\,\KL\!\big(x^{(L)}_k,\lambda_k\big)\Big\} < 1.
\]
The novel insight is to construct instances so that the minimum over $\lambda\in\cD_L$ reduces to a minimum of linear terms in $\omega$. In particular
\begin{equation}\label{eq:key_linear_min}
\min_{\lambda\in\cD_L}\Big\{H_{so}(\lambda)\sum_{k} \omega_k\,\KL(x^{(L)}_k,\lambda_k)\Big\}
\ \le\ (1+o_L(1))\min\Big\{\omega_1+\omega_2,\ c_3\omega_3,\dots,c_K\omega_K\Big\},
\end{equation}
for positive coefficients $\{c_j\}_{j=3}^K$.
Given that \eqref{eq:key_linear_min} holds, then taking the supremum w.r.t.\ $\omega$ and letting $L \to\infty$ shows the left-hand side of \eqref{eq:key_linear_min} is less than $\big(1+\sum_{j=3}^K c_j^{-1}\big)^{-1}$. This immediately gives us a difficulty ratio larger than one for all $K \geq 3$, which is our main objective. Within the set of instances that satisfy \eqref{eq:key_linear_min}, we get a tighter bound the larger $\sum_{j=3}^K c_j^{-1}$ is.

\smallskip
\noindent\textit{How we will get linear terms in \eqref{eq:key_linear_min}.}
For $\lambda^A$, only arms $1,2$ change by $\sqrt L$, so $\sum_k \omega_k\KL(x^{(L)}_k,\lambda^A_k)=(\omega_1+\omega_2)\cdot \frac{L}{2}$.
We will show $H_{so}(\lambda^A)$ is at most $2/L$ up to a $(1+o_L(1))$ factor, which makes the product $H_{so}(\lambda^A)\sum_k \omega_k\KL(x^{(L)}_k,\lambda^A_k)$ at most $\omega_1+\omega_2$ up to a $(1+o_L(1))$ factor.
For $\lambda^{B,j}$ only arm $j$ changes from $-L^j$ to $L^j\sqrt L$, thus
$\sum_k \omega_k\KL(x^{(L)}_k,\lambda^{B,j}_k)=\omega_j\cdot \frac{L^{2j+1}}{2}(1+o_L(1))$.
We will show $H_{so}(\lambda^{B,j})$ is at most $2(1+\sqrt{j-1})^2/L^{2j+1}$ up to a $(1+o_L(1))$ factor, so the $L^{2j+1}$ cancels and leaves a finite coefficient $c_j$ multiplying $\omega_j$.

\smallskip
\noindent\textit{Making $c_j$'s small.}
The coefficients $c_j$ are determined by the optimal static oracle allocation to differentiate $x^{(L)}$ and $\lambda^{B,j}$. On a high level, each $c_j$ is smaller when there are fewer contender arms close to the best, and the allocation can be concentrated on them. $\lambda^{B,j}$ and $x^{(L)}$ only differ at index $j$ and the values $x^{(L)}_k=-L^k$ are chosen so that $\lambda_j^{B,j} - \lambda_i^{B,j} = L^j\sqrt L \,(1+o_L(1))$, for arms $i = 1,\dots,j-1$, whereas every arm $k>j$ has difference $\lambda_j^{B,j} - \lambda_k^{B,j} = L^k\,(1+o_L(1))\gg L^j\sqrt L$.
Thus, it is only the $j-1$ `close' challengers that the oracle needs to spend non-negligible budget on, which gives the coefficients $c_j=(1+\sqrt{j-1})^2$.

\medskip
\noindent\textbf{Step 2: Verify Equation \eqref{eq:key_linear_min}.}
Fix $\omega \in \Delta_K$. For each $\lambda\in\cD_L$ we upper bound the quantity
$H_{so}(\lambda)\sum_{k=1}^K \omega_k\,\KL(x_k^{(L)},\lambda_k)$.
We begin by considering the static oracle hardness terms $H_{so}(\lambda^A)$ and $H_{so}(\lambda^{B,j})$.
To keep the main body of the paper focused, we state this as Lemma~\ref{lem:gaussian_promoted} and give its proof in the Appendix.
The main idea is to choose an explicit allocation proportion vector $\alpha$ to get an upper bound
$\Gamma(\lambda,\alpha)^{-1}$, and then use
$H_{so}(\lambda) = \Gamma_{so}^*(\lambda)^{-1} \leq \Gamma(\lambda,\alpha)^{-1}$.
\begin{lemma}\label{lem:gaussian_promoted}
For the instances $\lambda^A$ and $\lambda^{B,j}$ defined in Step~1:
\begin{enumerate}[label=(\alph*)]
\item $H_{so}(\lambda^A)\ \le\ \dfrac{2}{L}\big(1+o_L(1)\big)$.
\item For each $j\in\{3,\dots,K\}$,
\[
H_{so}(\lambda^{B,j})\ \le\ \dfrac{2(1+\sqrt{j-1})^2}{L^{2j+1}}\big(1+o_L(1)\big).
\]
\end{enumerate}
\end{lemma}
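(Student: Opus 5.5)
We bound each static oracle hardness by choosing an explicit allocation $\alpha\in\Delta_K^0$. Then $H_{so}(\lambda)=\Gamma^*_{so}(\lambda)^{-1}\le \Gamma(\lambda,\alpha)^{-1}$, and Lemma~\ref{lem:gaussian_oracle} gives the closed form
\[
\Gamma(\lambda,\alpha)=\min_{i\neq b}\frac{\alpha_b\alpha_i}{\alpha_b+\alpha_i}\cdot\frac{(\lambda_b-\lambda_i)^2}{2},\qquad b=i^*(\lambda).
\]
In each case the allocation puts almost all mass on the best arm and the "close" challengers. Far challengers get a vanishing but positive proportion $\varepsilon_L$. We choose $\varepsilon_L$ so that their enormous gaps still make their terms non-binding.

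\textbf{Part (a).} For $\lambda^A$ the best arm is $b=2$, with value $-1+\sqrt L$. Arm $1$ is at $-\sqrt L$, giving gap $2\sqrt L-1$. Arms $k\ge3$ have gaps of order $L^k$. Take $\alpha_1=\alpha_2=(1-\varepsilon_L)/2$ and $\alpha_k=\varepsilon_L/(K-2)$ for $k\ge3$, with, say, $\varepsilon_L=L^{-1}$.

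The arm-$1$ term equals $\frac{1-\varepsilon_L}{4}\cdot\frac{(2\sqrt L-1)^2}{2}=\frac L2(1+o_L(1))$. Each arm-$k\ge3$ term is at least of order $\varepsilon_L L^{2k}\ge L^{5}\gg L$. Hence $\Gamma(\lambda^A,\alpha)\ge \frac L2(1+o_L(1))$, which gives (a).

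\textbf{Part (b).} For $\lambda^{B,j}$ the best arm is $j$, with value $L^{j}\sqrt L$. Arms $i<j$ lie at $0,-1,-L^3,\dots,-L^{j-1}$. Each has gap $\Delta_i=L^{j}\sqrt L(1+O(L^{-1/2}))$, because $L^{j-1}=o(L^{j+1/2})$. Arms $k>j$ have gap $\Delta_k\ge L^k\ge L^{j+1}$.

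Among the $j-1$ close challengers with equal gaps, the Gaussian oracle balance suggests weights $\alpha_j\propto\sqrt{j-1}$ on the best arm and $\alpha_i\propto 1$ on each challenger. This follows from maximizing $\frac{\alpha_b a}{\alpha_b+a}$ subject to $\alpha_b+(j-1)a=1$. So set
\[
\alpha_j=(1-\varepsilon_L)\frac{\sqrt{j-1}}{\sqrt{j-1}+(j-1)},\qquad
\alpha_i=(1-\varepsilon_L)\frac{1}{\sqrt{j-1}+(j-1)}\quad (i<j),
\]
and split $\varepsilon_L$ evenly over the arms $k>j$, if any.

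A direct computation gives
\[
\frac{\alpha_j\alpha_i}{\alpha_j+\alpha_i}=\frac{(1-\varepsilon_L)\sqrt{j-1}}{(\sqrt{j-1}+(j-1))(1+\sqrt{j-1})}=\frac{1-\varepsilon_L}{(1+\sqrt{j-1})^2}.
\]
Therefore each close term equals $\frac{L^{2j+1}}{2(1+\sqrt{j-1})^2}(1+o_L(1))$.

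\textbf{Main obstacle.} The delicate step is checking that the far arms $k>j$ do not become binding. Their term behaves like $\min(\alpha_j,\alpha_k)\,\Delta_k^2/2$, which is of order $\varepsilon_L L^{2j+2}$. It must exceed the close terms, which are of order $L^{2j+1}$. This requires $\varepsilon_L L\to\infty$ while $\varepsilon_L\to0$, and $\varepsilon_L=L^{-1/2}$ works. We also need the relative error in the close gaps, $(1+O(L^{-1/2}))^2$, to be absorbed into $1+o_L(1)$, which it is.

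Taking the minimum over challengers then gives $\Gamma(\lambda^{B,j},\alpha)\ge \frac{L^{2j+1}}{2(1+\sqrt{j-1})^2}(1+o_L(1))$. Inverting yields (b). For part (a), the same reasoning shows that any choice with $\varepsilon_L\to0$ and $\varepsilon_L L^{4}\to\infty$ suffices.
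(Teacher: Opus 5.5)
Your proof is correct and follows the paper's route. You feed an explicit interior allocation into Lemma~\ref{lem:gaussian_oracle}, with the same close-arm weights $\alpha_j=\frac{1-\varepsilon_L}{1+\sqrt{j-1}}$ and $\alpha_i=\alpha_j/\sqrt{j-1}$, plus a vanishing mass on the far arms (for $j=K$ take $\varepsilon_L=0$ so the weights sum to one).

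The only real difference is in (b). Your $\varepsilon_L=L^{-1/2}$, split evenly, makes the far-arm terms exceed the close ones by a factor of order $L^{1/2}$. The paper instead puts mass $1/L$ on the far arms, distributed proportionally to $(\lambda_j-\lambda_k)^{-2}$. That leaves the far terms of the same order $L^{2j+1}$ and requires the explicit constant check $3/16>1/\bigl(2(1+\sqrt{j-1})^2\bigr)$. So your remark that $\varepsilon_L L\to\infty$ is ``required'' is only a convenient sufficient condition, not a necessary one.
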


\noindent Next, we compute $\sum_{k=1}^K \omega_k\,\KL(x_k^{(L)},\lambda_k)$ for $\lambda=\lambda^A$ and for $\lambda=\lambda^{B,j}$. Since each alternative differs from $x^{(L)}$ at only a few indices, the sum collapses to a simple expression.
For alternative $\lambda^A$, only arms $1$ and $2$ change, each by $\sqrt{L}$, hence
\[
\sum_{k=1}^K \omega_k \, \KL(x_k^{(L)}, \lambda_k^A) = (\omega_1 + \omega_2)\cdot \frac{L}{2}.
\]
For alternative $\lambda^{B,j}$, only arm $j$ changes from $-L^j$ to $L^j\sqrt{L}$, hence
\[
\sum_{k=1}^K \omega_k \, \KL(x_k^{(L)}, \lambda_k^{B,j})
= \omega_j \cdot \frac{(L^j\sqrt{L}+L^j)^2}{2}
= \omega_j\cdot \frac{L^{2j+1}}{2}\big(1+o_L(1)\big).
\]
Combining these with Lemma~\ref{lem:gaussian_promoted} gives
\begin{align*}
H_{so}(\lambda^A)\sum_{k=1}^K \omega_k \, \KL(x_k^{(L)}, \lambda_k^A)
&\le (1+o_L(1))(\omega_1+\omega_2),\\
H_{so}(\lambda^{B,j})\sum_{k=1}^K \omega_k \, \KL(x_k^{(L)}, \lambda_k^{B,j})
&\le (1+\sqrt{j-1})^2(1+o_L(1))\,\omega_j,
\qquad j\in\{3,\dots,K\}.
\end{align*}
Defining $c_j:=(1+\sqrt{j-1})^2$ and taking the minimum over $\lambda\in\cD_L$ yields \eqref{eq:key_linear_min}.

\medskip
\noindent\textbf{Step 3: Apply Theorem~\ref{thm:comparison} and conclude.}
Applying Theorem~\ref{thm:comparison} with $\mu=x^{(L)}$, $\cD(\mu)=\cD_L$, and $H=H_{so}$, and then using \eqref{eq:key_linear_min}, gives
\begin{align*}
\left( \sup_{\lambda \in \cD_L} R_{H_{so}, \infty}(\cA, \lambda) \right)^{-1}
&\le
\sup_{\omega \in \Delta_K} \min_{\lambda \in \cD_L}
\left\{ H_{so}(\lambda) \sum_{k=1}^K \omega_k \, \KL(x_k^{(L)}, \lambda_k) \right\}\\
&\le (1+o_L(1))\sup_{\omega\in\Delta_K}\min\Big\{\omega_1+\omega_2,\ c_3\omega_3,\dots,c_K\omega_K\Big\}\\
&\le \left(1+\sum_{j=3}^K c_j^{-1}\right)^{-1} + o_L(1).
\end{align*}
The last inequality follows from Lemma~\ref{lem:simplex_general_cj}, which is proven in the Appendix. Since
$\sup_{\mu \in \Theta^K_{u}} R_{H_{so}, \infty}(\cA, \mu) \geq \sup_{\lambda \in \cD_L} R_{H_{so}, \infty}(\cA, \lambda)$ for every $L$,
letting $L \to \infty$ gives
\[
\sup_{\mu \in \Theta^K_{u}} R_{H_{so}, \infty}(\cA, \mu)
\ \ge\
1+\sum_{j=3}^K \frac{1}{(1+\sqrt{j-1})^2} > 1 + \frac{1}{8}\log(K), 
\]
of which the second inequality is proven in Lemma~\ref{lem:log_lower_bound} of the Appendix.
Since $\cA \in \mathcal{C}_{\mathrm{cons}}(\Theta)$ was arbitrary, taking $\inf_{\cA \in \mathcal{C}_{\mathrm{cons}}(\Theta)}$ gives the result.\Halmos
\endproof

\section{Extension to Natural Exponential Families}\label{sec:expfam}

We now generalize the result of Theorem~\ref{thm:gaussian} to settings where rewards are drawn from any regular one-parameter NEF.

\begin{definition}[Regular one-parameter natural exponential family]\label{def:expfam}
A family of distributions $\{\nu_\theta:\theta\in\Theta\}$ on $\R$ is a \emph{regular one-parameter NEF parameterized by its mean}
if there exist an open interval $\mathcal H\subset\R$, a probability distribution $\nu_0$ supported on $\mathcal X\subseteq\R$, and
$A\in C^2(\mathcal H)$ with $0<A''(\eta)<\infty$ for all $\eta\in\mathcal H$ such that, for every $\eta\in\mathcal H$,
\[
\frac{d\nu_\eta}{d\nu_0}(x)=\exp(\eta x-A(\eta)),\qquad x\in\mathcal X.
\]
Define $\theta(\eta):=\E_{\nu_\eta}[X]=A'(\eta)$, set $\Theta:=\theta(\mathcal H)$, and write $\nu_\theta:=\nu_{\eta(\theta)}$
where $\eta(\cdot)=\theta^{-1}(\cdot)$.
\end{definition}

Throughout this section, assume $\{\nu_\theta:\theta\in\Theta\}$ is a family of distributions as in Definition~\ref{def:expfam}, with $\Var_{\nu_\theta}(X)\in(0,\infty)$ for all $\theta\in\Theta$.
The key fact we use is that, on a sufficiently small neighborhood of any fixed $\theta_0\in\Theta$, the KL divergence is locally quadratic. Lemma~\ref{lem:expfam_local_KL} formalizes this local approximation, which allows us to use a similar argument to that in the proof of Theorem~\ref{thm:gaussian} if we can construct instances that are sufficiently close together.

\begin{lemma}\label{lem:expfam_local_KL}
Let $\{\nu_\theta:\theta\in\Theta\}$ be a regular one-parameter NEF parameterized by its mean, with $\Theta$ open.
Fix $\theta_0\in\Theta$ and let $v_0=\Var_{\nu_{\theta_0}}(X)\in(0,\infty)$.
Then for every sequence $r_L\downarrow 0$ there exists $\delta_L\to 0$ such that, for all sufficiently large $L$ and all
$\theta,\theta'\in(\theta_0-r_L,\theta_0+r_L)$,
\[
(1-\delta_L)\frac{(\theta-\theta')^2}{2v_0}
\ \le\
\KL(\theta,\theta')
\ \le\
(1+\delta_L)\frac{(\theta-\theta')^2}{2v_0}.
\]
\end{lemma}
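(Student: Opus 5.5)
My plan is to write the KL divergence in the natural parametrization as a Bregman divergence of the log-partition function $A$, and then get the two-sided bound from the continuity of $A''$ at $\eta_0:=\eta(\theta_0)$. Since $A'=\theta(\cdot)$ is strictly increasing ($A''>0$) and $C^1$ on the open interval $\mathcal H$, the inverse map $\eta(\cdot)$ is $C^1$ on $\Theta$ with $\eta'(\theta)=1/A''(\eta(\theta))=1/\Var_{\nu_\theta}(X)$. In particular $v_0=A''(\eta_0)$. For a NEF, with $\eta=\eta(\theta)$ and $\eta'=\eta(\theta')$, the standard computation from the density $\exp(\eta x-A(\eta))$ gives
\[
\KL(\theta,\theta')=A(\eta')-A(\eta)-A'(\eta)(\eta'-\eta).
\]

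The first step is Taylor's theorem with the Lagrange remainder, which applies because $A\in C^2$:
\[
\KL(\theta,\theta')=\tfrac12 A''(\xi)(\eta'-\eta)^2
\]
for some $\xi$ between $\eta$ and $\eta'$. The second step converts $\eta'-\eta$ back into $\theta'-\theta$ using the mean value theorem for $\eta(\cdot)$:
\[
\eta'-\eta=\frac{\theta'-\theta}{A''(\zeta)}
\]
for some $\zeta$ between $\eta$ and $\eta'$. Combining the two,
\[
\KL(\theta,\theta')=\frac{(\theta-\theta')^2}{2}\cdot\frac{A''(\xi)}{A''(\zeta)^2}.
\]
It therefore suffices to show that $A''(\xi)/A''(\zeta)^2$ lies in $[(1-\delta_L)/v_0,\,(1+\delta_L)/v_0]$.

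The third step defines $\delta_L$. Set $s_L:=\sup\{|\eta(\theta)-\eta_0|:|\theta-\theta_0|<r_L\}$. Since $\eta(\cdot)$ is continuous and $r_L\downarrow0$, we have $s_L\to0$. For $L$ large enough the interval $[\eta_0-s_L,\eta_0+s_L]$ lies inside $\mathcal H$, because $\mathcal H$ is open. Note also that $\xi$ and $\zeta$ lie between $\eta$ and $\eta'$, so both are in this interval. Let
\[
\epsilon_L:=\sup_{|u-\eta_0|\le s_L}\frac{|A''(u)-v_0|}{v_0},
\]
which tends to $0$ by continuity of $A''$ at $\eta_0$. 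On this interval the ratio $A''(\xi)/A''(\zeta)^2$ lies between $(1-\epsilon_L)/\bigl(v_0(1+\epsilon_L)^2\bigr)$ and $(1+\epsilon_L)/\bigl(v_0(1-\epsilon_L)^2\bigr)$. I will take
\[
\delta_L:=\max\Bigl\{1-\tfrac{1-\epsilon_L}{(1+\epsilon_L)^2},\ \tfrac{1+\epsilon_L}{(1-\epsilon_L)^2}-1\Bigr\},
\]
which tends to $0$. For the finitely many small $L$ where this is undefined, I will set $\delta_L$ arbitrarily, which is allowed since the statement only requires the bound for sufficiently large $L$.

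No step is deep. The main point needing care is making sure the remainder points $\xi,\zeta$ stay in the shrinking neighborhood, which is why the argument passes through $s_L$ and the continuity of the inverse map. A second point is that the mean-parametrized KL, not the natural-parametrized one, is what appears in the statement. The bound is trivially true when $\theta=\theta'$, so that case needs no separate treatment.
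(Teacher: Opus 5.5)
Your proposal is correct and follows essentially the same route as the paper. Both arguments use the Bregman form of the KL, a Lagrange-remainder Taylor step giving $\frac12 A''(\xi)(\eta'-\eta)^2$, a mean-value step converting $\eta'-\eta$ into $\theta'-\theta$, and uniform control of $A''(\xi)/A''(\zeta)^2$ on the shrinking interval $[\eta_0-s_L,\eta_0+s_L]$. The differences are only cosmetic: you apply the mean value theorem to the inverse map $\eta(\cdot)$ rather than to $A'$, and you measure the deviation of $A''$ by a relative error $\epsilon_L$ instead of the paper's $m_L$ and $M_L$.
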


\begin{theorem}\label{thm:expfam}
For $K \geq 3$ and reward distributions of a regular one-parameter NEF,
\[
\inf_{\cA \in \mathcal{C}_{\mathrm{cons}}(\Theta)}\ \sup_{\mu \in \Theta^K_{u}} R_{H_{so}, \infty}(\cA, \mu)
\ \ge\
1+\sum_{j=3}^K \frac{1}{(1+\sqrt{j-1})^2}.
\]
\end{theorem}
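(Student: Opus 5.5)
The plan is to transport the Gaussian construction from the proof of Theorem~\ref{thm:gaussian} into a shrinking neighbourhood of a fixed interior point, where Lemma~\ref{lem:expfam_local_KL} says every KL divergence is within a factor $(1\pm\delta_L)$ of a Gaussian one with variance $v_0$.

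\textbf{Construction.} Fix $\theta_0\in\Theta$ and let $v_0=\Var_{\nu_{\theta_0}}(X)$. For each $L\ge K$, let $x^{G},\lambda^{A,G},\lambda^{B,j,G}$ be the Gaussian instances of Step~1. All their coordinates have absolute value at most $M_L:=L^{K}\sqrt L+L^K$. Choose a scale $s_L>0$ with $s_LM_L\to 0$; for instance $s_L=L^{-K-1}$. Define the NEF instances by the affine map $\theta_0+s_L(\cdot)$:
\[
x^{(L)}=\theta_0+s_Lx^{G},\qquad \lambda^{A}=\theta_0+s_L\lambda^{A,G},\qquad \lambda^{B,j}=\theta_0+s_L\lambda^{B,j,G}.
\]
Since $\Theta$ is open, all coordinates lie in $(\theta_0-r_L,\theta_0+r_L)$ with $r_L:=2s_LM_L\downarrow0$ for large $L$. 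An affine map with positive slope preserves orderings, so $\cD_L\subseteq\Alt(x^{(L)})\cap\Theta^K_u$ still holds.

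\textbf{Comparing KL terms.} By Lemma~\ref{lem:expfam_local_KL}, every KL divergence between coordinates of these instances satisfies
\[
\KL(\theta,\theta')=(1\pm\delta_L)\,\frac{s_L^2}{v_0}\,\KL_G(\theta^G,\theta'^G),
\]
where $\KL_G$ denotes the unit-variance Gaussian KL. Hence $\sum_k\omega_k\KL(x_k,\lambda_k)\le(1+\delta_L)(s_L^2/v_0)\sum_k\omega_k\KL_G(x^G_k,\lambda^G_k)$.

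\textbf{Bounding the oracle hardness.} I will bound $H_{so}(\lambda)$ from above by $\Gamma(\lambda,\alpha)^{-1}$ for a suitable allocation $\alpha$. The step that needs care is that $\Gamma(\lambda,\alpha)$ is an infimum over all of $\Alt(\lambda)\subseteq\Theta^K$, and some alternatives are far from the neighbourhood where the local approximation holds. I will handle this as follows.
\begin{itemize}
\item \emph{Reduction to segments.} For a one-parameter NEF, for fixed $\omega$ the infimum over $\Alt(\lambda)$ is attained by moving only the best arm $b$ and one challenger $j$ to a common value $m\in[\lambda_j,\lambda_b]$. Moving them apart, or moving any other arm, only increases the KL divergence, because $\KL(\cdot,\lambda_k)$ is monotone on each side of $\lambda_k$. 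This gives
\[
\Gamma(\lambda,\alpha)=\min_{j\ne b}\ \inf_{m\in[\lambda_j,\lambda_b]}\bigl(\alpha_b\KL(m,\lambda_b)+\alpha_j\KL(m,\lambda_j)\bigr).
\]
This is the standard characterization of \citet{glynn2004large}.
\item \emph{Applying the local bound.} Every relevant $m$ lies inside the neighbourhood, so the lower bound of Lemma~\ref{lem:expfam_local_KL} applies. It yields $\Gamma(\lambda,\alpha)\ge(1-\delta_L)(s_L^2/v_0)\Gamma_G(\lambda^G,\alpha)$.
\item \emph{Transferring the Gaussian bound.} Taking $\alpha$ to be the explicit allocation used in the proof of Lemma~\ref{lem:gaussian_promoted} gives
\[
H_{so}(\lambda)\le(1-\delta_L)^{-1}\frac{v_0}{s_L^2}\,\bigl(\text{Gaussian bound of Lemma~\ref{lem:gaussian_promoted}}\bigr).
\]
\end{itemize}

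\textbf{Conclusion.} Multiplying the two estimates, the factors $s_L^2/v_0$ cancel and only a factor $(1+\delta_L)/(1-\delta_L)=1+o_L(1)$ remains. This recovers \eqref{eq:key_linear_min} with the same coefficients $c_j=(1+\sqrt{j-1})^2$. Step~3 then goes through verbatim: apply Theorem~\ref{thm:comparison} with $\mu=x^{(L)}$, use Lemma~\ref{lem:simplex_general_cj} for the supremum over $\omega$, and let $L\to\infty$.

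The main obstacle is the uniformity in the hardness step. The $o_L(1)$ terms of Lemma~\ref{lem:gaussian_promoted} are relative errors at scale $L$, and $\delta_L$ must not interact badly with them. This is fine because both errors are multiplicative and each tends to zero separately. One must also check that the scaling $s_L$ makes $r_L\to0$ while Lemma~\ref{lem:expfam_local_KL} is applied along that single sequence $r_L$; this is why $s_L$ is fixed as a function of $L$ before invoking the lemma.
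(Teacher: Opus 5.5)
Your proposal is correct and follows essentially the same route as the paper. You shrink the Gaussian instances by the factor $L^{-(K+1)}$ around $\theta_0$. You use Lemma~\ref{lem:expfam_local_KL} to upper bound the KL sums, and to lower bound $\Gamma(\lambda,\alpha)$ after reducing the infimum to a common value $m\in[\lambda_j,\lambda_b]$. You then reuse the Gaussian analysis together with Theorem~\ref{thm:comparison} and Lemma~\ref{lem:simplex_general_cj}. The differences are cosmetic: you plug in the explicit allocations from the proof of Lemma~\ref{lem:gaussian_promoted} directly instead of passing through $H_0$ and the scaling identity $H_0(\lambda)=(v_0/\varepsilon_L^2)H^G_{so}(\tilde\lambda)$, and you take $r_L=2s_LM_L$ rather than $L^{-1/3}$.
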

\proof{Proof:}
Fix arbitrary $\cA \in \mathcal{C}_{\mathrm{cons}}(\Theta)$.
The proof will use the arguments in the proof of Theorem~\ref{thm:gaussian}, but we first scale the instances so that all means are sufficiently close to a fixed $\theta_0\in\Theta$. The proof proceeds as follows. Step~1 chooses a sequence $\varepsilon_L\downarrow 0$ and defines a baseline $x^{(L)}$ together with a set of instances
$\cD_L\subseteq \Alt(x^{(L)}) \cap \Theta^K_{u}$ by taking the same instances as in the Gaussian proof (centered at $\theta_0$) and multiplying every deviation from $\theta_0$ by $\varepsilon_L$.
Step~2 then chooses a sequence $r_L\downarrow 0$ and verifies that, for $L$ large, every component in $x^{(L)}$ and in $\cD_L$ lies in $(\theta_0-r_L,\theta_0+r_L)$, which ensures Lemma~\ref{lem:expfam_local_KL} applies to every KL term that will appear.
Step~3 then uses Lemma~\ref{lem:expfam_local_KL} to replace the $\KL$ terms in the sums $\sum_k \omega_k\,\KL(x_k^{(L)},\lambda_k)$, as well as in $H_{so}(\lambda)$, with $(\theta-\theta')^2/(2v_0)$. Combining these yields an inequality that upper bounds~\eqref{eq:thm1}.
Step~4 then shifts the instances by $\theta_0$ and rescales by $\varepsilon_L$, so the instances align with those used in Theorem~\ref{thm:gaussian}, allowing the analysis of Theorem~\ref{thm:gaussian} to be reused.
Finally, Step~5 uses this together with Theorem~\ref{thm:comparison} and lets $L\to\infty$ to obtain the stated lower bound.

\noindent\textbf{Step 1: Instance construction.}
Fix an integer $L\ge K$ and set $\varepsilon_L:=L^{-(K+1)}$. Define the baseline $x^{(L)}\in\R^K$ and alternatives
$\lambda^A,(\lambda^{B,j})_{j=3}^K\in\R^K$ by
\begin{align*}
x^{(L)}_k
&=
\begin{cases}
\theta_0, & k=1,\\
\theta_0-\varepsilon_L, & k=2,\\
\theta_0-L^{k}\varepsilon_L, & 3\le k\le K,
\end{cases}
&
\lambda^{A}_k
&=
\begin{cases}
\theta_0-\sqrt{L}\,\varepsilon_L, & k=1,\\
\theta_0-\varepsilon_L+\sqrt{L}\,\varepsilon_L, & k=2,\\
\theta_0-L^{k}\varepsilon_L, & 3\le k\le K,
\end{cases}
&
\lambda^{B,j}_k
&=
\begin{cases}
\theta_0, & k=1,\\
\theta_0-\varepsilon_L, & k=2,\\
\theta_0+L^{j}\sqrt{L}\,\varepsilon_L, & k=j,\\
\theta_0-L^{k}\varepsilon_L, & k\neq j.
\end{cases}
\end{align*}
All of these instances lie in $\Theta$ for sufficiently large $L$, since the largest deviation from $\theta_0$ is at most $L^K\sqrt{L}\,\varepsilon_L=L^{-1/2}$. Thus for sufficiently large $L$, $\cD_L:=\{\lambda^A\}\cup\{\lambda^{B,3},\dots,\lambda^{B,K}\}\subseteq\Alt(x^{(L)})\cap\Theta^K_{u}$.

\noindent\textbf{Step 2: KL is ``Gaussian-like'' on our scale.}
Set $r_L:=L^{-1/3}$. By Lemma~\ref{lem:expfam_local_KL} there exists a sequence $\delta_L \to 0$ such that for all
$\theta,\theta'\in(\theta_0-r_L,\theta_0+r_L)$,
\[
(1-\delta_L)\,\KL_0(\theta,\theta')\ \le\ \KL(\theta,\theta')\ \le\ (1+\delta_L)\,\KL_0(\theta,\theta'),
\qquad
\text{where} \qquad \KL_0(\theta,\theta'):=\frac{(\theta-\theta')^2}{2v_0}.
\]
$r_L$ is chosen to decay slower than the instances do, so for sufficiently large $L$, all components of $x^{(L)}$ and all instances in $\cD_L$ lie in $(\theta_0-r_L,\theta_0+r_L)$.

\noindent\textbf{Step 3: Replace $\KL$ with $\KL_0$.}
At this point, Step~2 ensures that Lemma~\ref{lem:expfam_local_KL} applies for all instances we consider.
The goal of this step is to upper bound the quantity
\[
H_{so}(\lambda)\sum_{k=1}^K \omega_k\,\KL\!\big(x^{(L)}_k,\lambda_k\big)
\]
by replacing the $\KL$ in it by $\KL_0$.
There are two places where $\KL$ enters: the KL sum and the static-oracle hardness $H_{so}(\lambda)$.
We handle them in turn and then combine the bounds. Define
\[
\Gamma_0(\mu,\alpha):=\inf_{\lambda\in\Alt(\mu)}\sum_{k=1}^K \alpha_k\,\KL_0(\lambda_k,\mu_k),
\qquad
\Gamma_0^*(\mu):=\sup_{\alpha\in\Delta_K^0}\Gamma_0(\mu,\alpha),
\qquad
H_0(\mu):=\frac{1}{\Gamma_0^*(\mu)}.
\]

\smallskip
\noindent\textit{(i) KL sums.}
Fix $\lambda\in\cD_L$ and $\omega\in\Delta_K$. Applying Lemma~\ref{lem:expfam_local_KL} coordinate-wise gives
\[
\sum_{k=1}^K \omega_k\,\KL\!\big(x^{(L)}_k,\lambda_k\big)
\ \le\ (1+\delta_L)\sum_{k=1}^K \omega_k\,\KL_0\!\big(x^{(L)}_k,\lambda_k\big).
\]

\smallskip
\noindent\textit{(ii) Hardness terms.}
Fix $\lambda\in\cD_L$ and $\alpha\in\Delta_K^0$, and let $b=i^*(\lambda)$.
As in the proof of Lemma~\ref{lem:gaussian_oracle},
\[
\Gamma(\lambda,\alpha)
=
\min_{j\neq b}\ \inf_{m\in\Theta}\Big\{\alpha_b\,\KL(m,\lambda_b)+\alpha_j\,\KL(m,\lambda_j)\Big\},
\]
and the infimum is attained at some $m\in(\lambda_j,\lambda_b)$.
For $\lambda\in\cD_L$ and $L$ large, $(\lambda_j,\lambda_b)\subset(\theta_0-r_L,\theta_0+r_L)$, so Lemma~\ref{lem:expfam_local_KL} implies
\[
\alpha_b\,\KL(m,\lambda_b)+\alpha_j\,\KL(m,\lambda_j)
\ \ge\ (1-\delta_L)\Big\{\alpha_b\,\KL_0(m,\lambda_b)+\alpha_j\,\KL_0(m,\lambda_j)\Big\}.
\]
Taking $\inf_m$ and then $\min_{j\neq b}$ yields $\Gamma(\lambda,\alpha)\ge (1-\delta_L)\Gamma_0(\lambda,\alpha)$.
Taking $\sup_\alpha$ gives $\Gamma_{so}^*(\lambda)\ge (1-\delta_L)\Gamma_0^*(\lambda)$, i.e.
\[
H_{so}(\lambda)\ \le\ \frac{1}{1-\delta_L}\,H_0(\lambda).
\]
Combining (i) and (ii), for all $\lambda\in\cD_L$ and $\omega\in\Delta_K$,
\[
H_{so}(\lambda)\sum_{k=1}^K \omega_k\,\KL\!\big(x^{(L)}_k,\lambda_k\big)
\ \le\
\frac{1+\delta_L}{1-\delta_L}\,
H_0(\lambda)\sum_{k=1}^K \omega_k\,\KL_0\!\big(x^{(L)}_k,\lambda_k\big).
\]

\noindent\textbf{Step 4: Transform to instances in Theorem~\ref{thm:gaussian}.}
What remains is to control
\[
\sup_{\omega\in\Delta_K}\ \min_{\lambda\in\cD_L}
\left\{H_0(\lambda)\sum_{k=1}^K \omega_k\,\KL_0\!\big(x_k^{(L)},\lambda_k\big)\right\}.
\]
The point of this step is that, after shifting by $\theta_0$ and rescaling by $\varepsilon_L$, this quantity becomes
exactly the same Gaussian variance $1$ expression that was analyzed in the proof of Theorem~\ref{thm:gaussian}. Define the rescaled mean vectors
\[
\tilde x^{(L)}_k:=\frac{x^{(L)}_k-\theta_0}{\varepsilon_L},\qquad
\tilde\lambda^A_k:=\frac{\lambda^A_k-\theta_0}{\varepsilon_L},\qquad
\tilde\lambda^{B,j}_k:=\frac{\lambda^{B,j}_k-\theta_0}{\varepsilon_L}\ \ (j=3,\dots,K),
\]
and let $\tilde\cD_L:=\{\tilde\lambda^A\}\cup\{\tilde\lambda^{B,3},\dots,\tilde\lambda^{B,K}\}$.
By construction, $\tilde x^{(L)}$ and $\tilde\cD_L$ are the baseline and instance family used in the proof of Theorem~\ref{thm:gaussian}.

\smallskip
\noindent\textit{(i) The $\KL_0$ sums rescale to Gaussian variance $1$ KL.}
For any $\omega\in\Delta_K$ and any $\lambda\in\cD_L$,
\[
\sum_{k=1}^K \omega_k\,\KL_0\!\big(x_k^{(L)},\lambda_k\big)
=\sum_{k=1}^K \omega_k\,\frac{\big(\varepsilon_L(\tilde x_k^{(L)}-\tilde\lambda_k)\big)^2}{2v_0}
=\frac{\varepsilon_L^2}{v_0}\sum_{k=1}^K \omega_k\,\frac{(\tilde x_k^{(L)}-\tilde\lambda_k)^2}{2}.
\]

\smallskip
\noindent\textit{(ii) $H_0$ rescales by the same factor.}
Shifting and scaling preserve $\Alt(\cdot)$, so for any $\alpha\in\Delta_K^0$,
\[
\Gamma_0(\lambda,\alpha)
=\inf_{\nu\in\Alt(\lambda)}\sum_{k=1}^K \alpha_k\,\KL_0(\nu_k,\lambda_k)
=\frac{\varepsilon_L^2}{v_0}\,
\inf_{\tilde\nu\in\Alt(\tilde\lambda)}\sum_{k=1}^K \alpha_k\,\frac{(\tilde\nu_k-\tilde\lambda_k)^2}{2}.
\]
Therefore $\Gamma_0^*(\lambda)=\frac{\varepsilon_L^2}{v_0}\,\Gamma_{so}^{G,*}(\tilde\lambda)$, where
$\Gamma_{so}^{G,*}$ is the Gaussian variance $1$ static oracle decay rate, and hence
\[
H_0(\lambda)=\frac{1}{\Gamma_0^*(\lambda)}=\frac{v_0}{\varepsilon_L^2}\,H_{so}^G(\tilde\lambda),
\]
with $H_{so}^G$ corresponding to the Gaussian variance $1$ static oracle hardness function.

\noindent Combining (i) and (ii), for any $\omega\in\Delta_K$ and $\lambda\in\cD_L$,
\[
H_0(\lambda)\sum_{k=1}^K \omega_k\,\KL_0\!\big(x_k^{(L)},\lambda_k\big)
=
H_{so}^G(\tilde\lambda)\sum_{k=1}^K \omega_k\,\frac{(\tilde x_k^{(L)}-\tilde\lambda_k)^2}{2}.
\]
Thus the max--min expression over $\lambda\in\cD_L$ under $(H_0,\KL_0)$ is identical to the Gaussian variance $1$
max--min expression over $\tilde\lambda\in\tilde\cD_L$ with baseline $\tilde x^{(L)}$.
Applying the analysis from Theorem~\ref{thm:gaussian} we get
\[
\sup_{\omega\in\Delta_K}\ \min_{\lambda\in\cD_L}
\left\{H_0(\lambda)\sum_{k=1}^K \omega_k\,\KL_0\!\big(x_k^{(L)},\lambda_k\big)\right\}
\le
\left(1+\sum_{j=3}^K \frac{1}{(1+\sqrt{j-1})^2}\right)^{-1}
+o_L(1).
\]
\noindent\textbf{Step 5: Apply Theorem~\ref{thm:comparison} and conclude.}
We now have the elements needed to complete the proof. 
\begin{align*}
\left( \sup_{\lambda \in \cD_L} R_{H_{so}, \infty}(\cA, \lambda) \right)^{-1}
&\le \sup_{\omega \in \Delta_K}\ \min_{\lambda \in \cD_L}
\left\{ H_{so}(\lambda) \sum_k \omega_k \, \KL(x_k^{(L)}, \lambda_k) \right\}
&&\tag{Thm.~\ref{thm:comparison}}\\
&\le \frac{1+\delta_L}{1-\delta_L}
\sup_{\omega \in \Delta_K}\ \min_{\lambda \in \cD_L}
\left\{ H_0(\lambda)\sum_{k=1}^K \omega_k\,\KL_0\!\big(x^{(L)}_k,\lambda_k\big) \right\}
&&\tag{Step~3}\\
&\le \frac{1+\delta_L}{1-\delta_L}
\left(\left(1+\sum_{j=3}^K \frac{1}{(1+\sqrt{j-1})^2}\right)^{-1}+o_L(1)\right)
&&\tag{Step~4}\\
&=
\left(1+\sum_{j=3}^K \frac{1}{(1+\sqrt{j-1})^2}\right)^{-1}+o_L(1).
\end{align*}
Since $\sup_{\mu\in\Theta^K_{u}}R_{H_{so},\infty}(\cA,\mu)\ge \sup_{\lambda\in\cD_L}R_{H_{so},\infty}(\cA,\lambda)$ for every $L$,
letting $L\to\infty$ gives
\[
\sup_{\mu \in \Theta^K_{u}} R_{H_{so}, \infty}(\cA, \mu)
\ \ge\
1+\sum_{j=3}^K \frac{1}{(1+\sqrt{j-1})^2}.
\]
Since $\cA \in \mathcal{C}_{\mathrm{cons}}(\Theta)$ was arbitrary, taking $\inf_{\cA \in \mathcal{C}_{\mathrm{cons}}(\Theta)}$ gives the result.\Halmos
\endproof

\section{Discussion}\label{sec:disc}
This paper establishes the fundamental result that fixed-budget BAI does not admit a complexity. This answers the question posed by \citet{qin2022open} and \citet{degenne2023existence} of whether an analog to the fixed-confidence theory established by \citet{garivier2016optimal} can be extended to the fixed-budget setting. This result also has implications for the ranking-and-selection literature, where the static oracle is often used as a benchmark. We show that, for $K\ge 3$, the static oracle rate is unattainable as a uniform guarantee across all instances. In particular, for every consistent algorithm family $\cA$, there exists an instance $\mu$ such that 
\[
\liminf_{T\to\infty}\frac{1}{T}\log\frac{1}{p_{\mu,T}(\cA_T)}
\ \le\
\frac{1}{\,1+\sum_{j=3}^K \left(1+\sqrt{j-1}\right)^{-2}\,}\,\Gamma_{so}^*(\mu).
\]

Since matching the static oracle uniformly is not possible, it is natural to aim for other theoretical guarantees. Minimax analysis provides strong guarantees, but the established algorithms that achieve the optimal error exponent are computationally intractable \citep{komiyama2022minimax}. Another direction recently explored is large-deviation admissibility: compare algorithms by their instance-dependent error exponents and determine whether one procedure uniformly dominates another. For two-armed Bernoulli rewards, \citet{wang2024universally} show that uniform sampling is admissible even though a complexity does not exist. \citet{imbens2025admissibility} construct an adaptive elimination algorithm that dominates uniform sampling when $K\ge 3$ with Gaussian equal-variance rewards. Constructing relevant classes of algorithms, such as elimination or static oracle-tracking algorithms, and determining which algorithms are admissible within these classes, is a promising direction for future research.

\ACKNOWLEDGMENT{
    I would like to thank Nils Rudi for his mentorship and helpful discussions throughout this project.
}

\begin{APPENDICES}
\renewcommand{\thelemma}{\thesection.\arabic{lemma}}
\setcounter{lemma}{0}
\section{Proofs}\label{app:proofs}
\proof{Proof of Lemma~\ref{lem:gaussian_oracle}:}
Let $\mu\in\R^K$ have unique best arm $b=i^*(\mu)$. Fix $\omega\in\Delta_K^0$ and define
\[
F_\omega(\xi)
:=\sum_{k=1}^K \omega_k\,\KL\!\big(\mathcal N(\xi_k,1)\,\|\,\mathcal N(\mu_k,1)\big)
=\sum_{k=1}^K \omega_k\,\frac{(\xi_k-\mu_k)^2}{2}.
\]
By definition, $\Gamma(\mu,\omega)=\inf_{\xi\in\Alt(\mu)}F_\omega(\xi)$.
For each $j\neq b$ define $\Alt_j(\mu):=\{\xi\in\R^K:\xi_j\ge \xi_b\}$.
$\Alt(\mu)=\bigcup_{j\neq b}\Alt_j(\mu)$, so
\[
\inf_{\xi\in\Alt(\mu)}F_\omega(\xi)=\min_{j\neq b}\ \inf_{\xi\in\Alt_j(\mu)}F_\omega(\xi).
\]

Fix $j\neq b$. The constraint defining $\Alt_j(\mu)$ only involves $(\xi_b,\xi_j)$, and each term of $F_\omega$ is nonnegative and separable,
so for any feasible $(u,v)$ with $v\ge u$ we may set $\xi_b=u$, $\xi_j=v$, and $\xi_k=\mu_k$ for $k\notin\{b,j\}$. Thus
\[
\inf_{\xi\in\Alt_j(\mu)}F_\omega(\xi)
=
\inf_{(u,v)\in\R^2:\ v\ge u}
\left\{\omega_b\frac{(u-\mu_b)^2}{2}+\omega_j\frac{(v-\mu_j)^2}{2}\right\}.
\]
The objective is strictly convex in $(u,v)$, and its unconstrained minimizer is $(u,v)=(\mu_b,\mu_j)$, which violates $v\ge u$ because $\mu_b>\mu_j$.
Thus, the constrained minimizer lies on the boundary $u=v=:x$, and the problem reduces to
\begin{equation}\label{eq:infx}
\inf_{x\in\R}\left\{\omega_b\frac{(x-\mu_b)^2}{2}+\omega_j\frac{(x-\mu_j)^2}{2}\right\}.
\end{equation}
Differentiating w.r.t.\ $x$ and setting to zero gives
\[
(\omega_b+\omega_j)x=\omega_b\mu_b+\omega_j\mu_j
\quad\implies\quad
x^*=\frac{\omega_b\mu_b+\omega_j\mu_j}{\omega_b+\omega_j},
\]
Plugging this into~\eqref{eq:infx} gives
\[
\inf_{\xi\in\Alt_j(\mu)}F_\omega(\xi)
=
\frac{\omega_b\omega_j}{\omega_b+\omega_j}\cdot \frac{(\mu_b-\mu_j)^2}{2}.
\]
Therefore, for every $\omega\in\Delta_K^0$,
\[
\Gamma_{so}^*(\mu)
= \max_{\omega\in\Delta_K^0}\min_{j\neq b}\ \frac{\omega_b\omega_j}{\omega_b+\omega_j}\cdot \frac{(\mu_b-\mu_j)^2}{2}. \Halmos
\]
\endproof

\medskip
\proof{Proof of Lemma~\ref{lem:gaussian_promoted}:}
We upper bound $H_{so}(\lambda)=1/\Gamma_{so}^*(\lambda)$ by choosing an allocation $\alpha\in\Delta_K^0$
and lower bounding $\Gamma_{so}^*(\lambda)$ via Lemma~\ref{lem:gaussian_oracle}.
In particular, if $b=i^*(\lambda)$, then for any $\alpha\in\Delta_K^0$,
\begin{equation}\label{eq:lem1bound}
    \Gamma_{so}^*(\lambda)\ \ge\ 
\min_{i\neq b}\ \frac{\alpha_b\alpha_i}{\alpha_b+\alpha_i}\cdot \frac{(\lambda_b-\lambda_i)^2}{2}.
\end{equation}

\smallskip
\noindent\textbf{(a) The instance $\lambda^A$.}
Here $i^*(\lambda^A)=b=2$. Consider the allocation
\[
\alpha^A_1=\alpha^A_2=\frac{L-K+2}{2L},
\qquad
\alpha^A_k=\frac{1}{L}\ \ (k=3,\dots,K).
\]
We lower bound separately the $i=1$ term and the terms $i\in\{3,\dots,K\}$ in \eqref{eq:lem1bound}.
The overall minimum is at least the minimum of these lower bounds, so it suffices to show both are
$\ge \frac{L}{2}(1+o_L(1))$.

\smallskip
\noindent\textit{Arm $i=1$:} $\lambda_2^A-\lambda_1^A=2\sqrt{L}-1$, therefore
\[
\frac{\alpha^A_2\alpha^A_1}{\alpha^A_2+\alpha^A_1}\cdot \frac{(\lambda_2^A-\lambda_1^A)^2}{2}
=\frac{L-K+2}{4L}\Big(2L-2\sqrt{L}+\tfrac12\Big) 
=\frac{L-K+2}{2}\Big(1-\frac{1}{\sqrt{L}}+O(L^{-1})\Big)
= \frac{L}{2}\Big(1+o_L(1)\Big).
\]

\smallskip
\noindent\textit{Arms $i\in\{3,\dots,K\}$:}
Here $\lambda_2^A-\lambda_i^A=L^i+\sqrt{L}-1$, and 
\[
\frac{\alpha^A_2\alpha^A_i}{\alpha^A_2+\alpha^A_i}
=\frac{(L-K+2)/(2L)\cdot (1/L)}{(L-K+2)/(2L)+1/L}
=\frac{L-K+2}{L(L-K+4)}.
\]
Therefore,
\[
\frac{\alpha^A_2\alpha^A_i}{\alpha^A_2+\alpha^A_i}\cdot \frac{(\lambda_2^A-\lambda_i^A)^2}{2}
=\frac{L-K+2}{L(L-K+4)}\cdot \frac{(L^i+\sqrt{L}-1)^2}{2}
\ \ge\ \frac{L}{2}\big(1+o_L(1)\big).
\]
Hence $\Gamma_{so}^*(\lambda^A)\ge \frac{L}{2}(1+o_L(1))$, and
\[
H_{so}(\lambda^A)=\frac{1}{\Gamma_{so}^*(\lambda^A)}\ \le\ \frac{2}{L}\big(1+o_L(1)\big).
\]

\smallskip
\noindent\textbf{(b) The instance $\lambda^{B,j}$.}
Fix $j\in\{3,\dots,K\}$ and write $\lambda:=\lambda^{B,j}$, so $i^*(\lambda)=b=j$.
For $i<j$ we have $\lambda_i\in\{0,-1,-L^3,\dots,-L^{j-1}\}$, so
\[
\lambda_j-\lambda_i=L^j\sqrt{L}+O(L^{j-1})=L^j\sqrt{L}\big(1+o_L(1)\big),
\qquad \text{thus} \qquad 
(\lambda_j-\lambda_i)^2=L^{2j+1}\big(1+o_L(1)\big),
\]
while for $k>j$ we have $\lambda_j-\lambda_k=L^j\sqrt{L}+L^k\ge L^k$.

For large $L$ arms $1,\dots,j-1$ are the only `close' challengers (gap scales with $L^j\sqrt{L}$), so we choose $\alpha$ to balance arm $j$
against these $j-1$ arms. The arms $k>j$ have much larger gaps, so we assign them only a vanishing mass, but distribute it proportional to $(\lambda_j-\lambda_k)^{-2}$ so that the $k>j$ terms do not end up
controlling the minimum in \eqref{eq:lem1bound}. Let $\tau_L:=\mathbf{1}\{j<K\}/L$ and define $\alpha\in\Delta_K^0$ by
\[
\alpha_k=
\begin{cases}
\dfrac{1-\tau_L}{\sqrt{j-1}\,\bigl(1+\sqrt{j-1}\bigr)}, & 1\le k\le j-1,\\[6pt]
\dfrac{1-\tau_L}{1+\sqrt{j-1}}, & k=j,\\[8pt]
\tau_L\,\dfrac{(\lambda_j-\lambda_k)^{-2}}{\sum_{\ell=j+1}^{K}(\lambda_j-\lambda_\ell)^{-2}}, & j+1\le k\le K.
\end{cases}
\]
We lower bound the minimum in \eqref{eq:lem1bound} by treating $i<j$ and (when $j<K$) $k>j$ separately.

\smallskip
\noindent\textit{Arms $i<j$:}
For $i<j$, we have $\alpha_i=\alpha_j/\sqrt{j-1}$ and $(\lambda_j-\lambda_i)^2=L^{2j+1}(1+o_L(1))$ so
\[
\frac{\alpha_j\alpha_i}{\alpha_j+\alpha_i}\cdot \frac{(\lambda_j-\lambda_i)^2}{2}
=\frac{1-\tau_L}{2(1+\sqrt{j-1})^2}L^{2j+1}\big(1+o_L(1)\big) = \frac{L^{2j+1}}{2(1+\sqrt{j-1})^2}\big(1+o_L(1)\big).
\]

\smallskip
\noindent\textit{Arms $k>j$ (when $j<K$):}
Here $\alpha_k\le\tau_L=1/L$ while $\alpha_j\ge \frac{1}{2(1+\sqrt{j-1})}$ for all $L\ge 2$, so for $L$ large
$\alpha_k\le \alpha_j$ and thus $\alpha_j/(\alpha_j+\alpha_k)\ge 1/2$. So
\[
\frac{\alpha_j\alpha_k}{\alpha_j+\alpha_k}\cdot \frac{(\lambda_j-\lambda_k)^2}{2}\ \ge\ \frac{\alpha_k(\lambda_j-\lambda_k)^2}{4}.
\]
By the definition of $\alpha_k$,
\[
\alpha_k(\lambda_j-\lambda_k)^2=\frac{\tau_L}{S},
\qquad
S:=\sum_{\ell=j+1}^{K}(\lambda_j-\lambda_\ell)^{-2}.
\]
Using $\lambda_j-\lambda_\ell\ge L^\ell$ for $\ell>j$ gives
\[
S\le \sum_{\ell=j+1}^\infty L^{-2\ell}=\frac{L^{-2(j+1)}}{1-L^{-2}}\le \frac{4}{3}L^{-2j-2},
\]
thus
\[
\frac{\alpha_j\alpha_k}{\alpha_j+\alpha_k}\cdot \frac{(\lambda_j-\lambda_k)^2}{2}
\ \ge\ \frac{1}{4}\cdot \frac{1/L}{(4/3)L^{-2j-2}}
=\frac{3}{16}L^{2j+1}.
\]
Since $\frac{3}{16}>\frac{1}{2(1+\sqrt{j-1})^2}$ for every $j\ge 3$, the $k>j$ terms are strictly larger than the $i<j$
terms for $L$ large. Therefore the minimum in \eqref{eq:lem1bound} is attained among $i<j$, and we conclude
\[
\Gamma_{so}^*(\lambda)\ \ge\ \frac{L^{2j+1}}{2(1+\sqrt{j-1})^2}\big(1+o_L(1)\big),
\qquad
H_{so}(\lambda^{B,j})\ \le\ \frac{2(1+\sqrt{j-1})^2}{L^{2j+1}}\big(1+o_L(1)\big). \Halmos
\]
\endproof

\medskip
\proof{Proof of Lemma~\ref{lem:expfam_local_KL}:}
Because the family is regular, there exists an open interval $\mathcal H\subset\R$ and a $C^2$ strictly convex
log-partition function $A:\mathcal H\to\R$ such that for $\eta\in\mathcal H$,
\[
\frac{d\nu_{\eta}}{d\nu_0}(x)=\exp(\eta x-A(\eta)),
\qquad
\KL(\nu_{\eta}\|\nu_{\eta'})=A(\eta')-A(\eta)-A'(\eta)(\eta'-\eta).
\]
In this parameterization,
\[
\theta(\eta):=\E_{\nu_\eta}[X]=A'(\eta),
\qquad
\Var_{\nu_\eta}(X)=A''(\eta).
\]
Since $A''>0$, the map $\eta\mapsto\theta(\eta)$ is strictly increasing and thus invertible; write $\eta(\theta)$ for its inverse.
Let $\eta_0:=\eta(\theta_0)$, so $v_0=A''(\eta_0)$.

Fix $\theta,\theta'\in\Theta$ and let $\eta:=\eta(\theta)$ and $\eta':=\eta(\theta')$.
By Taylor's theorem applied to $A$ at $\eta$, there exists $c$ between $\eta$ and $\eta'$ such that
\begin{equation}\label{eq:kl_eta_quad}
\KL(\nu_{\eta}\|\nu_{\eta'})
=\frac12 A''(c)(\eta'-\eta)^2.
\end{equation}
By the mean value theorem applied to $A'$ there exists $d \in (\eta,\eta')$ such that
\begin{equation}\label{eq:mvt_theta}
\theta'-\theta=A'(\eta')-A'(\eta)=A''(d)(\eta'-\eta).
\end{equation}
Combining \eqref{eq:kl_eta_quad} and \eqref{eq:mvt_theta} gives
\begin{equation}\label{eq:kl_theta_ratio}
\KL(\theta,\theta')=\frac12\,\frac{A''(c)}{A''(d)^2}\,(\theta-\theta')^2.
\end{equation}
Let $r_L\downarrow 0$. By continuity of $\eta(\cdot)$ at $\theta_0$, \( s_L:=\sup\{|\eta(\theta)-\eta_0|:\ |\theta-\theta_0|\le r_L\}\ \to\ 0.\)
Define
\[
m_L:=\inf\{A''(u): |u-\eta_0|\le s_L\},\qquad
M_L:=\sup\{A''(u): |u-\eta_0|\le s_L\}.
\]
Then $m_L\to v_0$, $M_L\to v_0$, and $m_L>0$ for $L$ large.
For any $\theta,\theta'\in(\theta_0-r_L,\theta_0+r_L)$, we have $\eta,\eta'\in[\eta_0-s_L,\eta_0+s_L]$, and
$c,d\in[\eta_0-s_L,\eta_0+s_L]$,  therefore
\[
m_L\le A''(c)\le M_L,\qquad m_L\le A''(d)\le M_L.
\]
Thus
\[
\frac{m_L}{M_L^2}\ \le\ \frac{A''(c)}{A''(d)^2}\ \le\ \frac{M_L}{m_L^2}.
\]
Let
\[
\delta_L:=\max\left\{\left|\frac{v_0m_L}{M_L^2}-1\right|,\ \left|\frac{v_0M_L}{m_L^2}-1\right|\right\}.
\]
Then $\delta_L\downarrow 0$ and, for $L$ large,
\[
\frac{1-\delta_L}{v_0}\ \le\ \frac{A''(c)}{A''(d)^2}\ \le\ \frac{1+\delta_L}{v_0}.
\]
Plugging this into \eqref{eq:kl_theta_ratio} gives, for all $\theta,\theta'\in(\theta_0-r_L,\theta_0+r_L)$,
\[
(1-\delta_L)\frac{(\theta-\theta')^2}{2v_0}
\ \le\
\KL(\theta,\theta')
\ \le\
(1+\delta_L)\frac{(\theta-\theta')^2}{2v_0}. \Halmos
\]
\endproof

\subsection{Auxiliary Lemmas}\label{app:auxiliary}

\begin{lemma}\label{lem:simplex_general_cj}
Let $K\ge 3$ and let $c_3,\dots,c_K>0$. Then
\[
\sup_{\omega\in\Delta_K}\min\Big\{\omega_1+\omega_2,\ c_3\omega_3,\dots,c_K\omega_K\Big\}
=
\left(1+\sum_{j=3}^K\frac{1}{c_j}\right)^{-1}.
\]
\end{lemma}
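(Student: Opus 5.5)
We prove the equality by matching an upper bound with an explicit feasible point. Write $S:=1+\sum_{j=3}^K c_j^{-1}$, $\phi(\omega):=\min\{\omega_1+\omega_2,\ c_3\omega_3,\dots,c_K\omega_K\}$, and $t:=\phi(\omega)$ for an arbitrary $\omega\in\Delta_K$.

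\emph{Upper bound.} By definition of $t$ we have $\omega_1+\omega_2\ge t$, and $\omega_j\ge t/c_j$ for each $j\in\{3,\dots,K\}$. Adding these inequalities and using $\sum_k\omega_k=1$ gives
\[
1=(\omega_1+\omega_2)+\sum_{j=3}^K\omega_j\ \ge\ t\Big(1+\sum_{j=3}^K c_j^{-1}\Big)=tS .
\]
Hence $\phi(\omega)\le S^{-1}$ for every $\omega$, and therefore $\sup_{\omega\in\Delta_K}\phi(\omega)\le S^{-1}$.

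\emph{Attainment.} Define
\[
\omega_1=\omega_2=\tfrac{1}{2S},\qquad \omega_j=\tfrac{1}{c_jS}\quad(j=3,\dots,K).
\]
All entries are positive, and they sum to $S^{-1}\big(1+\sum_{j\ge3}c_j^{-1}\big)=1$, so $\omega\in\Delta_K$, in fact in the interior $\Delta_K^0$. Every term in the minimum equals $S^{-1}$, so $\phi(\omega)=S^{-1}$ and the supremum is attained.

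Combining the two bounds gives the claimed equality. No step is a real obstacle; the only point needing care is that the upper bound uses only $\sum_k\omega_k=1$ and nonnegativity of the $\omega_k$. It therefore holds whether $\Delta_K$ denotes the closed simplex or the open simplex $\Delta_K^0$, since the maximizer is interior.
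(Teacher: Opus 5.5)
Your proposal is correct and follows essentially the same route as the paper: summing $\omega_1+\omega_2\ge t$ and $\omega_j\ge t/c_j$ against $\sum_k\omega_k=1$ gives the upper bound, and the allocation with $\omega_1+\omega_2=t^*$ and $\omega_j=t^*/c_j$ gives attainment. Your explicit split $\omega_1=\omega_2=1/(2S)$ also shows the maximizer lies in the open simplex $\Delta_K^0$, a detail the paper leaves implicit.
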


\proof{Proof:}
Let $t:=\min\{\omega_1+\omega_2,\ c_3\omega_3,\dots,c_K\omega_K\}$.
Then $\omega_1+\omega_2\ge t$ and $\omega_j\ge t/c_j$ for all $j\ge 3$, so
\[
1=\sum_{k=1}^K\omega_k
\ge
t+\sum_{j=3}^K\frac{t}{c_j}
=
t\left(1+\sum_{j=3}^K\frac{1}{c_j}\right).
\]
Thus $t\le \left(1+\sum_{j=3}^K\frac{1}{c_j}\right)^{-1}$.
Equality is reached at  $t^* := \left(1+\sum_{j=3}^K\frac{1}{c_j}\right)^{-1}$,
choosing $\omega_j=t^*/c_j$ for $j\ge 3$, and $\omega_1+\omega_2=t^*$.\Halmos
\endproof

\begin{lemma}\label{lem:log_lower_bound}
For every integer $K\ge 3$,
\[
\sum_{j=3}^K \frac{1}{(1+\sqrt{j-1})^2}\ >\ \frac18\log K.
\]
\end{lemma}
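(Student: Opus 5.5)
The plan is to compare each summand with a harmonic term, sum these into a harmonic number, and then use the standard lower bound on harmonic numbers in terms of $\log K$. The case $K=3$ is checked by hand.

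\textbf{Step 1: a termwise bound.} For every $m\ge 1$ we have $2\sqrt m\le 1+m$ (this is AM--GM), hence
\[
(1+\sqrt m)^2=1+2\sqrt m+m\le 2(1+m).
\]
With $m=j-1$ this gives
\[
\frac{1}{(1+\sqrt{j-1})^2}\ge \frac{1}{2j}.
\]
Summing over $j=3,\dots,K$ yields
\[
\sum_{j=3}^K \frac{1}{(1+\sqrt{j-1})^2}\ \ge\ \frac12\left(H_K-\frac32\right),
\qquad H_K:=\sum_{j=1}^K\frac1j .
\]

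\textbf{Step 2: a harmonic bound.} Comparing $H_K$ with the integral of $1/x$ gives the strict inequality $H_K>\log K+\gamma$ for all $K\ge1$, where $\gamma\approx 0.5772$. Hence it suffices to have
\[
\tfrac12\left(\log K+\gamma-\tfrac32\right)\ \ge\ \tfrac18\log K,
\]
which is equivalent to $\tfrac34\log K\ge \tfrac32-\gamma\approx 0.923$, that is, $\log K\ge 1.231$. This holds for all $K\ge 4$, since $\log 4\approx 1.386$. The strictness of the final inequality comes from the strict bound $H_K>\log K+\gamma$. If one prefers to avoid $\gamma$, the cruder bound $H_K\ge \log(K+1)$ works for $K$ beyond a small threshold, and the remaining small values of $K$ can be checked numerically.

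\textbf{Step 3: the case $K=3$.} Here the sum is the single term
\[
\frac{1}{(1+\sqrt2)^2}=\frac{1}{3+2\sqrt2}\approx 0.1716,
\]
while $\tfrac18\log 3\approx 0.1373$, so the inequality holds directly.

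The only delicate point is the constant bookkeeping for small $K$, where the slack is modest. The bound $(1+\sqrt m)^2\le 2(1+m)$ is chosen because it loses little for small $m$. The alternative $(1+\sqrt m)^2\le 4m$ would only cover $K\ge 8$ and would require more hand checks.
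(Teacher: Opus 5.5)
Your proof is correct and follows the same route as the paper: check $K=3$ by hand, bound each summand below by a harmonic-type term, and compare the resulting partial harmonic sum with $\log K$. The paper uses the cruder termwise bound $(1+\sqrt{j-1})^2<4(j-1)$, paired with $\sum_{m=2}^{K-1}\frac1m>\int_2^K\frac{dx}{x}=\log(K/2)$, and $\frac14\log(K/2)\ge\frac18\log K$ holds exactly when $K\ge 4$; so it needs neither the value of $\gamma$ nor any extra hand checks, and your closing claim that the $4m$ bound would only cover $K\ge 8$ is mistaken (that threshold appears only if the bound is paired with the weaker estimate $H_{K-1}\ge\log K$).
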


\proof{Proof:}
For $K=3$, \(\frac{1}{6} > \frac{\log(3)}{8} \).
Now assume $K\ge 4$.
\[\sum_{j=3}^K \frac{1}{(1+\sqrt{j-1})^2}
\ > \ \sum_{j=3}^K \frac{1}{4(j-1)}
\ >\ \frac14\int_2^K \frac{1}{x}\ dx
\ = \ \frac14\log\!\Big(\frac{K}{2}\Big) \ \ge \ \frac18\log K. \Halmos
\]
\endproof

\end{APPENDICES}

\bibliographystyle{informs2014}
\bibliography{references}

\end{document}